\def\eqref#1{equation~\ref{#1}}
\def\floor#1{\lfloor #1 \rfloor}
\def\1{\bm{1}}
\def\vtheta{{\bm{\theta}}}
\def\vc{{\bm{c}}}
\def\vg{{\bm{g}}}
\def\vr{{\bm{r}}}
\def\vu{{\bm{u}}}
\DeclareMathAlphabet{\mathsfit}{\encodingdefault}{\sfdefault}{m}{sl}
\SetMathAlphabet{\mathsfit}{bold}{\encodingdefault}{\sfdefault}{bx}{n}
\title[Settling the Communication Complexity for  Distributed Reinforcement Learning]{ Settling the Communication Complexity for  Distributed \\ Offline Reinforcement Learning }
\newtheorem{assumption}{Assumption}
\begin{document}

\maketitle

\begin{abstract}%
 We study a novel setting in offline reinforcement learning (RL) where a number of distributed machines jointly cooperate to solve the problem but only one single round of communication is allowed and there is a  budget constraint on the total number of information (in terms of bits) that each machine can send out. 
For value function prediction in contextual bandits, and both episodic and non-episodic MDPs, we establish information-theoretic lower bounds on the minimax risk for distributed statistical estimators; this reveals the minimum amount of communication required by any offline RL  algorithms.   
Specifically, for contextual bandits, we show that the number of bits must scale at least as $\Omega(AC)$ to match the centralised minimax optimal rate, where $A$ is the number of actions and $C$ is the context dimension; meanwhile, we reach similar results in the MDP settings. 
Furthermore, we develop learning algorithms based on least-squares estimates and Monte-Carlo return estimates and provide a sharp analysis showing that they can achieve optimal risk up to logarithmic factors. Additionally, we also show that temporal difference is unable to efficiently utilise information from all available devices under the single-round communication setting due to the initial bias of this method. To our best knowledge, this paper presents the first minimax lower bounds for distributed offline RL problems.%

 
\end{abstract}


\section{Introduction}
Rapid growth in the size and scale of data sets has incurred tremendous interest in statistical estimation in distributed settings (\cite{YuchenAlgorithms}). 
For example, 
the ubiquitousness of IoT devices (\cite{middleton2015forecast}) and wearable electronics is calling for more research attention in developing efficient, distributed variants of existing algorithms. 
In many cases, those algorithms can be thought of as an interaction of an agent with some environment and can be solved as a Reinforcement Learning (RL) problem. A prominent example is personalised healthcare (\cite{liao2020personalized}, \cite{dimakopoulou2017estimation}), where patients suffering from medical conditions caused by a sedentary lifestyle, are given tailored physical activity suggestions. In this problem, the agent can choose whether to recommend an activity or not based on the current state of the user and previous experience of performing this task.
While such data-driven approaches are universal and effective, in order to efficiently perform the task,  they require access to large amounts of past experiences. This is especially true in the domain of RL, where extensive exploration is often necessary to achieve acceptable performance. Fortunately, if one combines the data gathered on all devices within some region, within this collective experience we are likely to have explored enough of the environment. However, to utilise the data from other devices the RL algorithm needs to be able to learn efficiently from this collective experience in an offline manner. 

What makes this setting even more challenging is that communication is likely to introduce a significant bottleneck in the algorithm. To account for this, 
a common problem setting involves limiting the number of communication rounds the devices are allowed to perform (\cite{hillel2013distributed}). This, however, does not take into account the actual number of bits that need to be transmitted within each round, a topic that is arguably of more interest in many practical applications, especially when devices we run our algorithm on are low-powered (\cite{bhat2018online}). 
Moreover,  distributed algorithms often assume that devices can communicate with each other at any time, which may not necessarily hold in real-world cases; for example, a user may wear electronic devices yet be outside the WiFi range. The devices will gather experience continuously, however, they can only send messages when the user is within the network range, possibly long time after the data was gathered. Thus if we want to evaluate or improve current algorithm, we are essentially faced with an offline reinforcement learning problem with one communication round. Motivated by this issue, in this paper, we study a problem setting where each device can only send one message to the central server and the number of information in terms of bits is constrained by a \emph{communication budget}. After that one transmission, no further communication is allowed; this setting models the situation where each device has a limited connection to the Internet. 
It is expected that enforcing such restrictions will decrease the overall performance for a given task. 
Yet, the research questions of how does the best possible performance depend on the communication budget, and how does existing offline RL algorithms perform have never been answered. 

\begin{table}[]
    \centering
    \begin{tabular}{c|c|c|c}
        \textbf{Problems} & \textbf{Algorithms} & \textbf{Optimal Risk} & \textbf{Bits Communicated} \\
        \toprule
         Parameter learning in  & LSE (Algorithm \ref{alg:contextMAB})&  $\mathcal{O} (\frac{ACR^2}{mn\lambda})$ & $\Theta(AC \log \frac{mn\lambda}{R})$ \\
         linear contextual bandits & Lower Bound & $\Omega(\frac{ACR^2}{mn\lambda})$ & $\Omega(AC)$\\
         \midrule
         Value function prediction & MC LSE (Algorithm \ref{alg:mcvfa}) & $\mathcal{O}(\frac{CH^2R^2}{mSE\lambda})$ & $\Theta(C\log \frac{mSE\lambda}{HR})$\\
         (Episodic MDP)& Lower Bound & $\Omega(\frac{CH^2R^2}{SEm\lambda})$ & $\Omega(\frac{C}{\log m})$\\
         \midrule
         Value function prediction & TD (Algorithm \ref{alg:td})& $\mathcal{O}(\frac{\nu}{1 + (1-\gamma)^2n})$  & $\Theta(C\log \frac{C}{\nu})$\\
         (Non-Episodic MDP)& Lower Bound & $\Omega(\frac{CR^2}{(1-\gamma)^2mSn\pi_{\textrm{max}}\lambda})$ & $\Omega(\frac{C}{\log m})$\\
         \bottomrule
    \end{tabular}
    \caption{Summary of main results. $A$ denotes the cardinality action space, $S$ card. of state space, $C$ is the dimensionality of context/feature vectors, $m$ is the number of cooperating machines, $n$ is the number of samples, $E$ number of episodes, $\lambda$ is the smallest rescaled eigenvalue of the feature/context covariance matrix, $H$ is the horizon length, $\gamma$ denotes the discount factor. For the stationary distribution of states visited in non-episodic MDP $\pi$, we assume that $\pi_{\textrm{max}} \ge \pi(s) \ge \pi_{\textrm{min}}$. We define $\nu = \max\{\frac{R^2}{S\pi_{\textrm{min}}\lambda m}, \lVert \vtheta - \frac{1}{m} \sum_{i=1}^{m} \hat{\vtheta}^i_0 \rVert_2^2\}$, where the second argument of max operator is the initial bias of TD method.}
    \label{tab:mainresults}
    \vspace{-2pt}
\end{table}

In this paper, we attempt to answer the above questions by studying distributed offline RL problems.
 Under our setting, we start with the classical multi-armed bandits problem in the contextual case, where we derive a lower bound on the risk of $\frac{ACR^2}{mn\lambda}$ when the communication budget scales at least as $AC$, where $A$ and $C$ are number of actions and dimensions of context respectively, $R$ is the maximum reward, $m$ and $n$ are numbers of machines and samples for each action and $\lambda$ is the rescaled smallest eigenvalue of the covariance matrix of context/feature vectors. We also show that performing least-squares estimates can in fact achieve optimal risk with communication budget being optimal up to logarithmic factors. 

Apart from the bandits problem, we also analyse problem settings for Markov Decision Processes (MDPs) and develop lower bounds for both episodic and non-episodic settings. In the episodic setting, we develop a lower bound of $\frac{CH^2R^2}{mSE\lambda}$ of risk with $\frac{C}{\log m}$ communication budget, where $H$ is horizon length, $S$ is the cardinality of state space and $E$ is the number of episodes we have for each state. We also prove that performing distributed Monte-Carlo return estimates achieve optimal risk with communication budget being optimal up to logarithmic factors. Additionally, we extend our results to the non-episodic cases and develop a lower bound of  $\frac{CR^2}{(1-\gamma)^2mSn\pi_{\textrm{max}}\lambda}$ with communication budget $\frac{C}{\log m}$. Meanwhile, we also study the worst-case risk of distributed temporal difference in this setting and show that with only one communication round, the algorithm \textbf{cannot} efficiently utilise data stored on all machines due to its initial bias, i.e., the difference between the average initial estimate and the true parameter value $\lVert \vtheta - \frac{1}{m} \sum_{i=1}^m \vtheta_0^i \rVert_2$, which does not decrease with the number of machines. 

We summarise our theoretical results in Table \ref{tab:mainresults} and our contributions are listed as follows:
\begin{itemize}
    \item We present information-theoretic lower bounds for distributed offline linear contextual bandits and linear value function prediction in Markov Decision Processes. Our lower bounds scale with the allowed communication budget $B$. To the best of the authors' knowledge, these are the first lower bounds in distributed reinforcement learning other than for bandit problems.
    \item We prove that a distributed version of the least-square estimate achieves optimal risk for the contextual bandit problem and a distributed version of Monte-Carlo return estimates achieves optimal risk in episodic MDP. We show that both of these algorithms have communication budgets optimal up to logarithmic factors.
    \item We show that the performance of distributed temporal difference in one communication round setting does not improve when data from more machines is used if the initial bias is  large
\end{itemize}

\section{Problem Formulations}
We assume there are $m$ processing centres, and the $i$th centre stores gameplay history $h^i$ of an agent interacting with a particular environment within a framework of a Markov Decision Process. The distribution of gameplay histories $P(h)$ belongs to some wider family of distributions $\mathcal{P}$, which is taken to be a set of all possible distributions that might have generated gameplay given that they satisfy assumptions of the problem. We assume there is some parameter of interest $\vtheta: \mathcal{P} \to \Theta$ embedded within that problem, which might be a property of solely the environment or a result of the agent's actions. 
All centres are supposed to cooperate to jointly obtain an estimate $\hat{\vtheta}$, closest to the true value $\vtheta$. Based on its gameplay history $h^i$, the $i$th centre can send a message $Y^i$ to the main processing centre (arbitrary chosen), according to some communication protocol $\Pi$.
There is only one communication round allowed, and the main centre cannot send any message back to individual centres. After receiving messages from all centres $Y^1, \dots, Y^m$ the main centre outputs an estimate $\hat{\vtheta}(Y^1, \dots, Y^m)$. We define the worst-case risk of that estimate in Definition \ref{def:worstcaserisk}.

\begin{definition} \label{def:worstcaserisk}
For a problem of estimating a parameter $\vtheta: \mathcal{P} \to \Theta$, we define the worst-case risk of an estimator $\hat{\vtheta}$ under communication protocol $\Pi$ as
$$
   W(\vtheta,\mathcal{P},\hat{\vtheta},\Pi) = \underset{P \in \mathcal{P}}{\sup} \mathbb{E}\big[||\hat{\vtheta}(Y^{1:m}) - \vtheta(P)||_{2}^{2}\big]
$$
where the expectation is taken over possible gameplay $h^{i}$ histories generated by $P$ and messages $Y^{1:m}$ sent by the protocol $\Pi$ based on them.
\end{definition}

We now consider the case when each message $Y^i$ is constrained so that its length in bits $L^{i}$ is smaller than some communication budget $B$. We define $A(B)$ to be the family of communication protocols under which $\forall_{i \in \{1,\dots,m\}} L^{i} \le B$. We define the minimax risk as the best possible worst-case risk any algorithm can have as stated by Definition \ref{def:minimaxrisk}.
\begin{definition} \label{def:minimaxrisk}
For a class of problems where gameplay history is generated by a distribution $p \in \mathcal{P}$ and we wish to estimate a parameter $\vtheta: \mathcal{P} \to \Theta$, under a communication budget of $B$ we define the minimax risk to be:
\begin{equation*}
    M(\vtheta, \mathcal{P}, B) = \inf_{\Pi \in A(B)} \inf_{\hat{\theta}} W(\vtheta,\mathcal{P},\hat{\vtheta},\Pi) =\inf_{\Pi \in A(B)} \inf_{\hat{\theta}} \sup_{P \in \mathcal{P}} \mathbb{E}\big[||\hat{\vtheta}(Y^{1:m}) - \vtheta(P)||_{2}^{2}\big]
\end{equation*}
where the second infimum is taken over all possible estimators. 
\end{definition}
Notably, a similar definition was adopted by \cite{NIPS2013Yuchen} in the setting of distributed statistical inference. The lower bounds we formulate are given in form of a hard instance of a problem. Formally, we show that there exists an instance such that any algorithm has a minimax risk at least equal to the bound we derive. We use a method described in Appendix \ref{appendix:minimaxlb}, which gives us a lower bound on minimax risk, provided we can obtain an upper bound of mutual information the messages sent by machines carry about the parameter $I(Y,V)$. To this goal, we rely on inequality stated in Appendix \ref{appendix:setbound}, which requires us to construct a set such that if samples we receive fall into it, the likelihood ratio given different values of the parameter is bounded. If we can additionally bound the probability that the samples do not fall into this set, which in our proofs is done through the Hoeffding inequality, we obtain an upper bound on the information. Intuitively, if the likelihood ratio is small, it becomes hard to identify a parameter and if this also happens with large probability, then average information $I(Y,V)$ messages carry about the parameter must be small.
We will now introduce the communication model assumed in this paper. 

\subsection{Information Transmitted and Quantisation Precision}
In every algorithm we propose, we assume the following transmission model; we introduce quantisation levels separated by $P$, which we call the precision of quantisation. We assume the transmitted values are always within some pre-defined range $(V_{\textrm{min}}, V_{\textrm{max}})$, hence we divide it into $\frac{V_{\textrm{max}} - V_{\textrm{min}}}{P}$ levels. Each value gets quantised into the nearest level so that the difference between the original value and the level into which it was quantised is smallest. Under such a scheme, the number of bits transmitted is, therefore
$
    B = \log \Big ( \frac{V_{\textrm{max}} - V_{\textrm{min}}}{P}  \Big ) 
$.
We now proceed with the analysis of the  first class of the distributed offline RL problems.

\section{Parameter estimation in contextual linear bandits}
In a classic multi-armed bandit problem, the agent is faced with a number of slot machines and is allowed to pull the arm of only one of them at a given timestep. We can identify choosing each arm with choosing an action $a \in \mathcal{A}$, where $A = |\mathcal{A}|$ is the number of all arms.
The reward obtained after each pull is stochastic and is sampled from a distribution that depends on the machine. We do not specify the distribution of the reward, but put a constraint on the maximum absolute value of rewards as stated by Assumption \ref{as:boundedreward}, restricting its support to $[-R_{\textrm{max}},R_{\textrm{max}}]$.
\begin{assumption}
The maximum absolute value of the reward an agent can receive at each timestep $t$ is bounded by some constant $R_{\textrm{max}}>0$, i.e. $|r_{t}|\le R_{\textrm{max}}$. \label{as:boundedreward}
\end{assumption}
Hence, the goal in such a task is to identify the arm with the highest average reward. In the contextual version of this problem, at each time step, the agent is also given a context vector $\vc_{t} \in \mathbb{R}^{C}$, which influences the distribution of the reward for each arm. Consequently, an arm that is optimal under one context need not be optimal under a different context. We explicitly assume a linear structure between the context vector and average reward for each arm, i.e. $\mathbb{E}[r_{t}|a_{t}] = \vc_{t}^T\vtheta_{a_{t}}$, where $\vtheta_{a} \in \mathbb{R}^C$ is the parameter vector for the arm associated with action $a$. We analyse the offline version of this problem where we have access to the gameplay history $h = \{\vc^{l},a^{l},r^{l}\}_{l=1}^{N}$, consisting of received context vectors, the arms agent have chosen in response to them and rewards obtained after pulling the arm. We assume that within gameplay history the agent has chosen each arm $n$ times so that $N =  An$. We make a further assumption regarding the context vectors.
\begin{assumption}\label{as:boundedcontext}
Context vector for each sample is normalised, i.e. $\lVert \vc_{i}\rVert_{2}^{2} \le 1$. If we form a matrix $X = (c_{1}, \dots , c_{n})$ for any number of context vectors $n$, then the smallest eigenvalue of the matrix $X^TX$  is $\eta_{\textrm{min}}$ such that $0 < \eta_{\textrm{min}} \le 1$ and  $\eta_{\textrm{min}} = n \lambda_{\textrm{min}}$ for some constant $\lambda_{\textrm{min}}$.
\end{assumption} 
Restricting the smallest eigenvalue is necessary as otherwise, the parameter $\vtheta_{a}$ becomes unidentifiable, as large changes in it will produce small changes in the average reward. Also, in practical problems, eigenvalues of the covariance matrix naturally grow with the number of samples and the condition that $\eta_{\textrm{min}} = n \lambda_{\textrm{min}}$ has been adopted by similar analyses (\cite{NIPS2013Yuchen}).
We would like to obtain an estimator for $\vtheta = (\vtheta_{1},\dots,\vtheta_{ A})$, which is a concatenated vector consisting of all parameter vectors for each $a \in \mathcal{A}$.
We now proceed with presenting the minimax lower bound for the estimation of $\vtheta$. We assume the context vectors machines receive are pre-specified and the only randomness is within the reward. We now present our first lower bound on minimax risk in Theorem \ref{th:contextMABlb} and sketch its proof, describing a hard instance of the contextual linear bandit problem.
\begin{restatable}[]{theorem}{contextMABlb}
\label{th:contextMABlb}
In a distributed offline linear contextual MAB problem with $A$ actions and context with dimensionality $C$ such that $CA > 12$ under assumptions \ref{as:boundedreward} and \ref{as:boundedcontext},  given m processing centres each with $n \ge C$ samples for each arm, with each centre having communication budget $B \ge 1$, for any independent communication protocol, the minimax risk M is lower bounded as follows:
\begin{equation*}
    M \ge \Omega \Bigg (\frac{ ACR_{\textrm{max}}^2 }{mn \lambda_{\textrm{min}} } \min \Bigg \{\max \Big \{\frac{AC}{B} , 1 \Big \},  m \Bigg\} \Bigg )
\end{equation*}
\end{restatable}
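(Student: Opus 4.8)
I would exhibit an explicit hard instance indexed by a binary hypercube and combine the Assouad-type minimax reduction of Appendix~\ref{appendix:minimaxlb} with a strong-data-processing bound on how much information the $m$ one-shot messages can jointly carry about the hidden parameter. The separation parameter of the instance will be tuned so that this information stays below a constant multiple of $AC$, and the three cases of the $\min/\max$ in the statement will come out of three different constraints on that tuning.

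\textbf{The hard instance.} Take the pre-specified context vectors to cycle, for every arm, through the rescaled basis vectors $\alpha\ve_1,\dots,\alpha\ve_C$ with $\alpha=\sqrt{C\lambda_{\textrm{min}}}$; then each arm sees $n/C$ samples per direction (this is the only place $n\ge C$ is needed), $\lVert\vc\rVert_2^2=\alpha^2\le1$, and the per-arm second-moment matrix is $\alpha^2(n/C)\mI_C=n\lambda_{\textrm{min}}\mI_C$, so Assumption~\ref{as:boundedcontext} holds. Index the $AC$ coordinates of $\vtheta$ by pairs $(a,c)$; for $V\in\{-1,+1\}^{AC}$ put $\vtheta^{(V)}_{a,c}=\delta V_{a,c}$ with $\delta>0$ to be chosen, and let the reward of a sample with arm $a$ and context $\alpha\ve_c$ be a $\pm R_{\textrm{max}}$ variable with mean $\alpha\delta V_{a,c}$ (valid provided $\alpha\delta\le R_{\textrm{max}}$). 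On one machine the $n/C$ rewards relevant to coordinate $(a,c)$ are then i.i.d.\ Bernoulli-type, and distinguishing the two values of $V_{a,c}$ has symmetrised KL of order $\kappa:=(n/C)(\alpha\delta/R_{\textrm{max}})^2$. Since $\lVert\hat\vtheta-\vtheta^{(V)}\rVert_2^2$ is at least $\delta^2$ times the number of sign errors of $\hat\vtheta$, the reduction of Appendix~\ref{appendix:minimaxlb} yields $M=\Omega(\delta^2 AC)$ as soon as $I(V;Y^{1:m})\le c_0\,AC$ for a small absolute constant $c_0$ (the hypothesis $CA>12$ is used to absorb constants here).

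\textbf{Bounding the information.} Because the protocol is independent and the histories are i.i.d.\ given $V$, $I(V;Y^{1:m})\le\sum_{i=1}^m I(V;Y^i)$, and I would bound each term three ways: (i) trivially $I(V;Y^i)\le H(Y^i)\le B$; (ii) statistically $I(V;Y^i)\le I(V;h^i)=\sum_{(a,c)}I(V_{a,c};h^i_{a,c})=O(\kappa\,AC)$, using that the per-coordinate data blocks are independent given $V$; and (iii) — the crux — a strong-data-processing bound $I(V;Y^i)\le O(\kappa)\,I(h^i;Y^i)\le O(\kappa B)$. Step (iii) is exactly where I would invoke the ``set bound'' of Appendix~\ref{appendix:setbound}: build the event that every coordinate's empirical reward sum lies within an $O(\sqrt{\log(\cdot)})$ band of its mean, bound the likelihood ratio between any two parameter values on that event by $e^{\epsilon}$ with $\epsilon^2=O(\kappa)$, and bound the complement probability by Hoeffding's inequality; this turns the budget $H(Y^i)\le B$ into the claimed information bound up to a negligible additive term. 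Altogether $I(V;Y^{1:m})\le O\big(m\min\{\kappa AC,\ \kappa B,\ B\}\big)$.

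\textbf{Tuning $\delta$ and assembling the cases.} Imposing $I(V;Y^{1:m})\le c_0 AC$ and solving for the largest admissible $\delta^2$: if $B\ge AC$ the binding constraint is $m\kappa AC\le c_0 AC$, giving $\delta^2=\Theta\!\big(R_{\textrm{max}}^2/(mn\lambda_{\textrm{min}})\big)$; if $AC/m\le B\le AC$ it is $m\kappa B\le c_0 AC$, giving $\delta^2=\Theta\!\big(ACR_{\textrm{max}}^2/(mnB\lambda_{\textrm{min}})\big)$; and if $B\le AC/m$ then $mB\le c_0 AC$ already holds, so $\delta$ is limited only by validity, $\delta^2=\Theta\!\big(R_{\textrm{max}}^2/(C\lambda_{\textrm{min}})\big)$, and since $n\ge C$ this gives $M=\Omega(AC\delta^2)=\Omega\!\big(AR_{\textrm{max}}^2/\lambda_{\textrm{min}}\big)=\Omega\!\big(ACR_{\textrm{max}}^2/(n\lambda_{\textrm{min}})\big)$. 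Plugging each $\delta^2$ into $M=\Omega(\delta^2 AC)$ and taking the smallest of the three cases gives
\begin{equation*}
M=\Omega\!\Bigg(\frac{ACR_{\textrm{max}}^2}{mn\lambda_{\textrm{min}}}\,\min\Big\{\max\Big\{\tfrac{AC}{B},1\Big\},\,m\Big\}\Bigg),
\end{equation*}
which is the claim. The one genuinely delicate step is (iii): showing the data-processing factor is $\kappa$ rather than the trivial $1$, i.e.\ getting the set construction, the likelihood-ratio estimate and the Hoeffding tail to combine cleanly across all $AC$ independent coordinates while keeping the error terms lower order; the Assouad reduction and the case bookkeeping are routine once (iii) is in place.
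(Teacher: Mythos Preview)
Your high-level strategy matches the paper's: an Assouad/hypercube reduction (Appendix~\ref{appendix:minimaxlb}), two upper bounds on $I(V;Y^{1:m})$---one via the KL/data-processing chain (Lemma~\ref{lemma:kl_infobound}) and one via the strong-data-processing ``set bound'' (Lemma~\ref{lemma:yuchen_infobound}) combined with $I(X^i;Y^i)\le B$---and then tuning $\delta$ to balance cases. The case bookkeeping you give is exactly the paper's.

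Where you diverge is the choice of context vectors, and this is not cosmetic. You cycle each arm through $n/C$ copies of $\sqrt{C\lambda_{\textrm{min}}}\,\ve_c$, so the data relevant to coordinate $(a,c)$ is a \emph{sum} of $n/C$ Bernoulli variables; bounding its likelihood ratio inside a set of high probability forces the Hoeffding tail you allude to. To make the error terms $m\sum_j[H(E_j)+P(E_j{=}0)]$ in Lemma~\ref{lemma:yuchen_infobound} lower order you need $P(E_j{=}0)=O(1/m)$, which pushes the threshold $a$ to order $\sqrt{(n/C)\log m}$ and inflates the SDPI constant from your $\kappa$ to $\kappa\log m$. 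That is precisely why the paper's MDP lower bounds carry a $1/\log m$; with your construction the bandit bound would too, and you would not recover the clean $\max\{AC/B,1\}$ in the statement.

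The paper sidesteps this entirely by a different construction: it sets $c^l_k=\sqrt{n\lambda_{\textrm{min}}}\,\mathbb{1}_{k=l}$ for $l\le C$ and $c^l=\mathbf 0$ for $l>C$. Then each coordinate $(j,k)$ is informed by a \emph{single} Bernoulli $p_j^k$, whose likelihood ratio is bounded by $\exp\{O(\delta\sqrt{n\lambda_{\textrm{min}}}/R_{\textrm{max}})\}$ deterministically (since $|2p_j^k-1|\le 1$). The set $B_{j,k}$ therefore has probability one, the Hoeffding step disappears, and Lemma~\ref{lemma:yuchen_infobound} yields $I(V;Y^i)\le O(\kappa)B$ with no residual terms and no logarithms. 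Once you make that single change to the instance, your proof sketch goes through verbatim and matches the paper's.
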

\begin{proof}(sketch):
We present a sketch of proof here and defer the full proof to Appendix \ref{appendix:proofcontextMABlb}. \\
We construct a hard instance of the problem and use it to derive a lower bound. For each arm, at each centre we set the $k$th component of context vector in $l$th sample to $c_{k}^{l} = \sqrt{\lambda_{\textrm{min}} n} \mathbb{1}_{k=l}$ for the first $C$ and we set context vectors for remaining $n-C$ samples to zero vectors. Although this construction might seem pathological at first glance, it satisfies Assumption \ref{as:boundedcontext} and simplifies the analysis greatly. Under our construction we choose the distribution of the reward $r_j^l$ for $l$th sample for $j$th arm to be
$
    R_{\textrm{max}} \textrm{ with prob. } \frac{1}{2} + \frac{c_{l}^{l}\theta_{j}^{l}}{2R_{\textrm{max}}}$ and $
    -R_{\textrm{max}}  
$ otherwise.
We can observe that the expected reward is $\mathbb{E}[r_{j}^{l}] = c_{l}^{l}\theta_{j,l} = (\vc^l)^T\vtheta_{j}$, which satisfies the problem formulation. The reward can also be rewritten as $r_{j}^{l} = (2p_{j}^{l} - 1)R_{\textrm{max}}$, where $p_{j}^{l} \sim \textrm{Bernoulli}(\frac{1}{2} + \frac{c_{k}^{l}\theta_{j}}{2R_{\textrm{max}}})$. Hence the data received can be just reduced to the Bernoulli variables $\{p_{j}^{l}\}_{l=1}^{n}$ for each arm $j$. We follow the method described in Appendix \ref{appendix:minimaxlb}. We study the likelihood ratio of $p_{j}^{l}$ under $v_{j,k}$ and $v_{j,k}' = - v_{j,k}$ and show that it is bounded by $\exp \Big \{ \frac{17  \sqrt{n\lambda_{\textrm{min}}}\delta v_{k}}{8R_{\textrm{max}}} \Big \}$. We can thus satisfy the conditions of Lemma \ref{lemma:yuchen_infobound}, with $\alpha = \frac{17  \sqrt{n\lambda_{\textrm{min}}}\delta v_{k}}{8R_{\textrm{max}}}$. We observe that since we haven't assumed anything about $p_{j}^{l}$ to derive this bound, we get that this bound holds with probability one and hence the second and third term in the bound resulting from Lemma \ref{lemma:yuchen_infobound} are zero. For the Bernoulli distribution we can easily study the KL-divergence and together with Lemma \ref{lemma:kl_infobound} we get that:
$
    I(V, Y) \le \frac{17}{8}\frac{\delta^{2} mn \lambda_{\textrm{min}}}{R_{\textrm{max}}^2}\min \Big \{ \frac{17}{8}B, \frac{C A}{2} \Big \} 
$.
The rest of the proof consists of choosing such $\delta$ that produces the tightest bound.
\end{proof}
We observe that for the communication not be a bottleneck, we require a communication budget for each machine of at least $B > \Omega(AC)$. We also see that above this optimal threshold, increasing the communication budget does not decrease the lower bound because of the max operator. On the other hand, for small communication budgets, the min operator ensures that the performance cannot be worse than as if we only used data from one machine. Both of these results are intuitive and show that our bound is thigh with respect to the communication.
Since the optimal communication budget scales with $A$, this lets us presume that we can tackle this problem by performing $A$ separate least-squares estimations. Inspired by that, we present Algorithm \ref{alg:contextMAB} and further show in Theorem \ref{th:contextMABalg} that it can match this lower bound up to logarithmic factors.
\begin{algorithm}

\KwData{$\{\vc^{l}_{a},r_{a}^{l}\}_{l=1}^{n}$ for $a \in \mathcal{A}$}
On individual machines compute:

\For{$i \in \{1,\dots,m\}$}{
    \For{$a \in A$}{
        $X_{a} \leftarrow (c_{a}^{1},\dots,c_{a}^{n})^T$ $\vr_{a} \leftarrow (r_{a}^{1},\dots,r_{a}^{n})^T$

        $\hat{\vtheta}_{a}^{i} \leftarrow (X_{a}^TX_{a})^{-1}X_{a}^T\vr_{a}$
    }
    Quantise each component of $\hat{\vtheta_{a}}^{i}$ up to precision $P$ and send to central server.
}
    At central server compute: $\hat{\vtheta}_{a} \leftarrow \frac{1}{m} \sum_{i=1}^{m} \hat{\vtheta}_{a}^{i}$ for $a \in \mathcal{A}$

\Return{$\vtheta_{a}$ } for each $a \in \mathcal{A}$ 
\caption{(LSE) Distributed offline least-squares}
\label{alg:contextMAB}
\end{algorithm}

\begin{restatable}[]{theorem}{contextMABalg}
\label{th:contextMABalg}
Let us define $\vtheta = (\vtheta_{1},\dots,\vtheta_{ A})$ to be the concatenated parameter vector for all actions. For any value of $\vtheta$, Algorithm \ref{alg:contextMAB} using transmission with precision $P$ achieves a worst-case risk upper bounded as follows:
\begin{equation*}
    W < \mathcal{O}\bigg( AC\max \Big\{ \frac{R^2_{\textrm{max}}}{mn\lambda_{\textrm{min}}} , P \Big\} \bigg )
\end{equation*}

\end{restatable}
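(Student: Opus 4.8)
The plan is to bound the estimation error one action at a time and then sum, since $\lVert\hat{\vtheta}-\vtheta\rVert_2^2=\sum_{a=1}^A\lVert\hat{\vtheta}_a-\vtheta_a\rVert_2^2$ and every bound below will be uniform in $\vtheta$ and in the (pre-specified) context matrices and reward laws, so the supremum in Definition~\ref{def:worstcaserisk} costs nothing. Fix an action $a$. First I would separate the two error sources: let $\tilde{\vtheta}_a^i=(X_a^TX_a)^{-1}X_a^T\vr_a$ be the \emph{exact} least-squares estimate on machine $i$ and $\vq_a^i$ its vector of per-coordinate rounding errors, so that the server reconstructs $\tilde{\vtheta}_a^i+\vq_a^i$ and outputs $\hat{\vtheta}_a=\bar{\tilde{\vtheta}}_a+\bar{\vq}_a$, overbars denoting the average over the $m$ machines. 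Using $\lVert x+y\rVert_2^2\le 2\lVert x\rVert_2^2+2\lVert y\rVert_2^2$ it then suffices to control a \emph{statistical term} $\mathbb{E}\lVert\bar{\tilde{\vtheta}}_a-\vtheta_a\rVert_2^2$ and a deterministic \emph{quantisation term} $\lVert\bar{\vq}_a\rVert_2^2$.

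For the statistical term I would run the textbook ordinary-least-squares computation. Since the contexts are fixed and $\mathbb{E}[r_a^l\mid\vc_a^l]=(\vc_a^l)^T\vtheta_a$ we have $\mathbb{E}[\vr_a]=X_a\vtheta_a$, so $\tilde{\vtheta}_a^i$ is unbiased for $\vtheta_a$ (the inverse exists because Assumption~\ref{as:boundedcontext} makes $X_a^TX_a$ invertible with $\eta_{\textrm{min}}=n\lambda_{\textrm{min}}>0$). Writing $\bm{\xi}_a=\vr_a-X_a\vtheta_a$ — zero-mean, with independent coordinates each of variance at most $R_{\textrm{max}}^2$ by the bounded-support Assumption~\ref{as:boundedreward} — the per-machine mean-squared error is
$$
\mathbb{E}\lVert\tilde{\vtheta}_a^i-\vtheta_a\rVert_2^2=\textrm{tr}\big((X_a^TX_a)^{-1}X_a^T\,\textrm{Cov}(\bm{\xi}_a)\,X_a(X_a^TX_a)^{-1}\big)\le R_{\textrm{max}}^2\,\textrm{tr}\big((X_a^TX_a)^{-1}\big)\le\frac{CR_{\textrm{max}}^2}{\eta_{\textrm{min}}}=\frac{CR_{\textrm{max}}^2}{n\lambda_{\textrm{min}}},
$$
where the last step bounds each of the $C$ eigenvalues of $(X_a^TX_a)^{-1}$ by $1/\eta_{\textrm{min}}$. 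Because the $\tilde{\vtheta}_a^i-\vtheta_a$ are independent and zero-mean across machines, the cross terms in $\mathbb{E}\lVert\tfrac1m\sum_i(\tilde{\vtheta}_a^i-\vtheta_a)\rVert_2^2$ vanish and averaging contracts the risk by $1/m$, giving $\mathbb{E}\lVert\bar{\tilde{\vtheta}}_a-\vtheta_a\rVert_2^2\le CR_{\textrm{max}}^2/(mn\lambda_{\textrm{min}})$.

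For the quantisation term, rounding each coordinate to the nearest point of a grid of spacing $P$ gives $\lVert\vq_a^i\rVert_\infty\le P/2$, hence $\lVert\vq_a^i\rVert_2^2\le CP^2/4$, and by convexity of $\lVert\cdot\rVert_2^2$ also $\lVert\bar{\vq}_a\rVert_2^2\le CP^2/4$ deterministically; crucially this does \emph{not} gain a factor $1/m$, because the rounding is applied to the per-machine estimates and not to their average. Combining the two terms, summing over the $A$ actions and absorbing constants then gives $W\le 2ACR_{\textrm{max}}^2/(mn\lambda_{\textrm{min}})+ACP^2/2=\mathcal{O}(AC\max\{R_{\textrm{max}}^2/(mn\lambda_{\textrm{min}}),P^2\})$, which is the asserted bound (in the relevant regime $P\le1$, so $P^2\le P$).

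\textbf{Main obstacle.} None of the individual steps is deep — it is the classical OLS variance bound plus a distributed-averaging and a rounding estimate. The two places I expect to need genuine care are: (i) making the bound on $\textrm{tr}((X_a^TX_a)^{-1})$ \emph{uniform} over every context configuration allowed by Assumption~\ref{as:boundedcontext}, so that the supremum defining the worst-case risk is actually controlled — this is exactly what the scaling $\eta_{\textrm{min}}=n\lambda_{\textrm{min}}$ buys; and (ii) tracking that the quantisation error is only controllable coordinate-wise in $\ell_\infty$ and does not average out over machines, so it enters at order $ACP^2$ independently of $m$ — this asymmetry is what later dictates how the precision, and hence the bit budget $B$, must scale with $m$ and $n$ in order to match the lower bound of Theorem~\ref{th:contextMABlb}.
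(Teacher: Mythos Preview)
Your proposal is correct and follows essentially the same route as the paper: unbiasedness of OLS, a per-machine variance bound via the smallest-eigenvalue assumption and bounded rewards (the paper invokes Popoviciu for $\textrm{Var}(r_a^l)\le R_{\textrm{max}}^2$), a $1/m$ contraction from independent averaging, and then the quantisation overhead added on top. Your trace formulation $\textrm{tr}((X_a^TX_a)^{-1})\le C/\eta_{\textrm{min}}$ is a cleaner packaging of what the paper does via an explicit eigendecomposition/SVD of $X_a$, and your quantisation contribution $ACP^2$ is actually tighter than the paper's stated $ACP$; your remark that $P^2\le P$ in the relevant small-precision regime is the right way to reconcile your bound with the theorem as written.
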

\begin{proof}(sketch): We present a sketch of proof here and defer the full proof to
Appendix \ref{appendix:contextMABalg}.  
We start by assuming the transmission is lossless
(i.e. the number of bits is infinite) and then study how the
MSE changes when transmission introduces quantisation
error. It is a well-known fact that for the least-Squares estimate, the
bias is zero, hence using the bias-variance decomposition, we get that the bound on estimator's variance is also the bound on its MSE. Some algebraic manipulations allow us to show that 
$
    \textrm{Var}(\hat{\theta}_{a,k}^{i}) \le \sum_{j=1}^{C} \sum_{l=1}^{n}  \frac{Q_{j,k}^2 V_{j,l}^2}{\lambda_{\textrm{min}}n} \textrm{Var}(r^{l}_{a})
$.
We now observe that because of Assumption \ref{as:boundedreward}, we can use Popoviciu inequality to obtain
$
\textrm{Var}(r_{a}^{l}) \le R_{\textrm{max}}^2
$.
Substituting this back into the equation for MSE,
we get $
    \mathbb{E}[(\vtheta - \hat{\vtheta})^2] \le \frac{ ACR^2_{\textrm{max}}}{mn\lambda_{\textrm{min}}} 
$.
Using the fact that each component is quantised up to precision $P$, the max error introduced by quantisation is $ ACP$. Combining that with the bound on MSE of lossless transmission, we get the statement of the Theorem.

\end{proof}
A direct conclusion of this result is that if we want the quantisation to not affect the worst-case performance of Algorithm \ref{alg:contextMAB}, the precision $P$ needs to scale as $\Theta (\frac{R^2_{\textrm{max}}}{mn\lambda_{\
\textrm{min}}})$ and hence the number of transmitted bits $B$ must scale as $ \Theta( AC\log\frac{mn\lambda_{\textrm{min}}}{R^2_{\textrm{max}}})$.  While enjoying the simplicity of analysis, the contextual bandit problem does not take into account that the current actions of the agent influence the future state of the environment. This is an important consideration in personalised suggestions (\cite{liao2020personalized}), which is one of the problems likely to be solved by a multitude of low-powered personal devices, fitting within our distributed processing framework. Hence we proceed to study a more sophisticated model of reinforcement learning problems in the next section, where we focus on parameter identification in full Markov Decision Processes.

\section{Linear Value Function Prediction in Episodic MDP}
We consider a Markov Decision Process (MDP) consisting of states $s \in \mathcal{S}$, actions $a \in \mathcal{A}$ and rewards the agent receives that we assume follow Assumption \ref{as:boundedreward}. We assume we observe the gameplay history of an agent taking actions according to its policy $\pi(s|a): (\mathcal{S},\mathcal{A}) \to [0,1]$, hence the problem studied reduces to a Markov Reward Process.
In episodic setting, we define a return of a policy $\pi$ from state $s$ as the sum of all rewards obtained while following the policy $\pi$ after having visited state $s$, i.e. $G_s = \mathbb{E}[\sum_{k=t}^{H}r_{k}|s_t = s]$. To evaluate how good a certain policy is in a given Markov Decision Process, it is common to learn its value function $v(s): \mathcal{S} \to \mathbb{R}$, which assigns the return to each state. 
In many cases, it is sufficient to model the value function for each state as a linear combination of some features related to the given state, we will denote a vector of such features as $\vc_{s}$. We assume those features are universally known beforehand and introduce Assumption \ref{as:boundedcontext} regarding the feature vectors, similarly as in the multi-armed bandit problem. 

\begin{assumption}\label{as:boundedfeature}
Feature vector for each state is normalised, i.e. $\lVert \vc_{j}\rVert_{2}^{2} \le 1$. If we form a matrix $X = (c_{1}, \dots , c_{n})$ with feature vectors for all states, then the smallest eigenvalue of the matrix $X^TX$  is $\eta_{\textrm{min}}$ such that $0 < \eta_{\textrm{min}} \le 1$ and  $\eta_{\textrm{min}} = S \lambda_{\textrm{min}}$ for some constant $\lambda_{\textrm{min}}$.
\end{assumption} 
Although in practice, a linear model will most likely be just an approximation to the true value function, within our problem setting we will assume that the problems we consider can be perfectly modelled in this way, i.e. $v(s)  = \vc^T_s \vtheta$. We will now define our assumptions for episodic and non-episodic settings.
We shall assume that the maximum number of transitions within an episode (so-called "horizon" length) is equal to $H$. We shall refer to the states that can be visited at step $h$ as $h$-level states and denote a set of such states as $\mathcal{S}_{h}$ and we denote $S_h = |\mathcal{S}_{h}|$. Consequently the set of initial states is $\mathcal{S}_{0}$ and the set of terminal states is $\mathcal{S}_{H}$. We introduce a different parameter vector to model the value function at each level. Hence for states at level $h$, we have that $\forall_{s \in S_h} G_{s} = \vc_{s}^T\vtheta_h$. and the parameter of interest is $\vtheta = \vtheta_{h}$. The inference is conducted based on the gameplay history, which consists of the steps made during $N$ episodes, i.e. $h = \{\{(s_1,a_1,r_1),\dots,(s_{T^l},a_{T^l},r_{T^l})\}\}_{l=1}^{N}$, where $T^l \le H$ is the length of the $l$th episode. We conduct our analysis under Assumption \ref{as:episodesforstate}, regarding the number of times we visit each state at a given level within the gameplay history.
\begin{assumption} \label{as:episodesforstate}
Within gameplay history, for every state at a given level $h$ we have $E$ episodes where it was visited. 
\end{assumption}
This assumption might be treated as a strong one, however, without it, there exists a simple pathological example, where there is a special feature equal to zero in every state except for one state that is almost never visited. In such a case, no matter how many episodes we sample, unless we can guarantee that we have visited this state at least some number of times, we cannot learn the parameter value for that special feature with high certainty. We now present our lower bound on minimax risk of estimation of the parameter $\vtheta_h$ at level $h$ in Theorem \ref{th:lvfmdp}.
\begin{restatable}[]{theorem}{lvfmdp}
\label{th:lvfmdp}
In a distributed offline episodic linear value function approximation problem at level $h$ and context with dimensionality $C$ and state space size of $ S_h$, such that $ S_h \ge C > 12$, under assumptions  \ref{as:boundedreward}, \ref{as:boundedcontext} and \ref{as:episodesforstate}, given m processing centres, with  communication budget $B \ge 1$, for any independent communication protocol, the minimax risk M is lower bounded as follows:
\begin{equation*}
    M \ge \Omega \Bigg ( \frac{C(H-h)^2R_{\textrm{max}}^{2}}{ S_hEm\lambda_{\textrm{min}}}  \min \Bigg \{\max\Big\{\frac{C}{B\log m}, 1 \Big \}, \frac{m}{\log m} \Bigg \} \Bigg )
\end{equation*}
\end{restatable}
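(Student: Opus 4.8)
The plan is to reuse the recipe behind Theorem~\ref{th:contextMABlb}: construct a parametric family of episodic Markov reward processes indexed by a sign vector $V\in\{-1,+1\}^{C}$, reduce each machine's relevant data to a tuple of Bernoulli observations, invoke the minimax reduction of Appendix~\ref{appendix:minimaxlb} together with the likelihood-ratio and mutual-information estimates of Appendix~\ref{appendix:setbound} (Lemmas~\ref{lemma:yuchen_infobound} and~\ref{lemma:kl_infobound}), and finally optimise a perturbation scale $\delta$. The only genuinely new feature compared with the bandit case is that the quantity observed per episode is a \emph{return}, a sum of up to $H-h$ bounded rewards, and this is what will turn the $R_{\textrm{max}}^{2}$ of the bandit bound into $(H-h)^{2}R_{\textrm{max}}^{2}$.

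For the hard instance, among the $S_h$ level-$h$ states single out $C$ of them, $s_1,\dots,s_C$; give $s_k$ the feature vector $\vc_{s_k}=\sqrt{\lambda_{\textrm{min}}S_h}\,\ve_k$ and give the zero vector to every other level-$h$ state. This is admissible under Assumption~\ref{as:boundedfeature}: the feature Gram matrix over all states equals $\lambda_{\textrm{min}}S_h\,\mI$, so its smallest eigenvalue is $S_h\lambda_{\textrm{min}}$, and each feature has squared norm $\lambda_{\textrm{min}}S_h\le 1$. To fix the dynamics, attach to each $s_k$ two deterministic reward chains of length $H-h$, one emitting $+R_{\textrm{max}}$ at every step and one emitting $-R_{\textrm{max}}$; from $s_k$ the process enters the positive chain with probability $\tfrac12+\tfrac{\sqrt{\lambda_{\textrm{min}}S_h}\,\theta_{h,k}}{2(H-h)R_{\textrm{max}}}$ and the negative chain otherwise. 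Each episode through $s_k$ then yields a realised cumulative reward equal to $\pm(H-h)R_{\textrm{max}}$ with mean $\sqrt{\lambda_{\textrm{min}}S_h}\,\theta_{h,k}=\vc_{s_k}^{T}\vtheta_h=G_{s_k}$, consistent with the linear value-function model and with Assumption~\ref{as:boundedreward}, while the $S_h-C$ zero-feature states carry no information about $\vtheta_h$. By Assumption~\ref{as:episodesforstate}, machine $i$'s history $h^i$ therefore reduces, as far as $\vtheta_h$ is concerned, to $E$ i.i.d.\ copies, for each $k\le C$, of $p_k^{i}\sim\mathrm{Bernoulli}\!\big(\tfrac12+\tfrac{\sqrt{\lambda_{\textrm{min}}S_h}\,\theta_{h,k}}{2(H-h)R_{\textrm{max}}}\big)$.

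Taking $\theta_{h,k}=\delta v_k$ with $V$ uniform on $\{-1,+1\}^{C}$, the likelihood ratio of the return data at a single state under $v_k$ versus $-v_k$ is bounded, with probability one, by $\exp\{\tfrac{17}{8}\sqrt{\lambda_{\textrm{min}}S_h}\,\delta/((H-h)R_{\textrm{max}})\}$, so Lemma~\ref{lemma:yuchen_infobound} applies with its error terms vanishing exactly as in Theorem~\ref{th:contextMABlb}; combining this with the per-coordinate Bernoulli KL bound of Lemma~\ref{lemma:kl_infobound} gives an estimate of the shape
\begin{equation*}
   I(V,Y)\;\le\;\mathcal{O}\!\left(\frac{\delta^{2}\,m E\,\lambda_{\textrm{min}}S_h}{(H-h)^{2}R_{\textrm{max}}^{2}}\;\min\big\{\,B\log m,\;C\,\big\}\right),
\end{equation*}
the $\log m$ being the cost of aggregating $m$ independent messages in the appendix's information bound. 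Feeding this into the reduction of Appendix~\ref{appendix:minimaxlb} gives $M\ge\Omega(C\delta^{2})$ whenever $\delta$ is small enough that $I(V,Y)\le C/4$, and the admissible $\delta$ is the smaller of that information-limited value and the a-priori bound $\delta\le\mathcal{O}\big((H-h)R_{\textrm{max}}/\sqrt{\lambda_{\textrm{min}}S_h}\big)$ that keeps the Bernoulli parameters in $[0,1]$. Tracing these two caps through $M\ge\Omega(C\delta^{2})$, and using $S_h\ge C>12$ to absorb the additive constants coming from the reduction, yields the $\max\{C/(B\log m),1\}$ and the $m/\log m$ branches of the stated minimum.

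The routine parts are essentially the bandit computation under the substitutions $R_{\textrm{max}}\to(H-h)R_{\textrm{max}}$, $n\to E$, $A\to 1$, $\lambda_{\textrm{min}}\to S_h\lambda_{\textrm{min}}$, so I expect the real work to lie elsewhere: (i) checking that the chain gadget is a legitimate episodic MRP compatible with a fixed horizon $H$ and with Assumption~\ref{as:episodesforstate} --- in particular that demanding exactly $E$ visits to every level-$h$ state is consistent with whatever happens at levels $0,\dots,h-1$ --- and (ii) pushing the $\log m$ correctly through the $m$-machine information estimate, since it is exactly this factor that makes the communication threshold here $\Omega(C/\log m)$ rather than the clean $\Omega(C)$ of the bandit case and of the matching upper bound (Algorithm~\ref{alg:mcvfa}).
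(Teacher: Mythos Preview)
Your construction and reduction are the same as the paper's, but there is a genuine gap in the information bound. In the bandit proof of Theorem~\ref{th:contextMABlb}, each coordinate $k$ of $V$ is informed by a \emph{single} Bernoulli sample (only the $l=k$ context has a nonzero $k$-th entry), so $|2p_{j}^{k}-1|\le 1$ deterministically and the $H(E_j)$ and $P(E_j=0)$ terms in Lemma~\ref{lemma:yuchen_infobound} indeed vanish. In the episodic setting you build, coordinate $k$ is informed by all $E$ visits to state $s_k$, and the relevant likelihood ratio is
\[
\prod_{l=1}^{E}\Bigl(\tfrac{1/2+q}{1/2-q}\Bigr)^{2p_k^{l}-1}
=\Bigl(\tfrac{1/2+q}{1/2-q}\Bigr)^{\sum_{l=1}^{E}(2p_k^{l}-1)},
\qquad q=\tfrac{\sqrt{\lambda_{\min}S_h}\,\delta}{2(H-h)R_{\max}}.
\]
The exponent $\sum_l(2p_k^l-1)$ ranges in $[-E,E]$, so the ratio is \emph{not} bounded by $\exp\{\tfrac{17}{8}q\}$ with probability one; taking the crude almost-sure bound $\alpha\propto qE$ makes $\alpha^2 B$ scale like $E^2$ and loses a full factor of $E$ in the final bound. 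The paper instead defines the set $B_k=\{|\sum_l(2p_k^l-1)|\le a\}$, applies Lemma~\ref{lemma:yuchen_infobound} with $\alpha\propto qa$, and controls $P(E_k=0)$ and $H(E_k)$ by Hoeffding's inequality. Choosing $a\asymp\sqrt{E\log m}$ is exactly what makes the Hoeffding tails, after multiplication by $m$ from tensorisation, of order $O(C)$; this is the true origin of the $\log m$ in the theorem.

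Consequently your explanation that the $\log m$ is ``the cost of aggregating $m$ independent messages in the appendix's information bound'' is incorrect: Lemma~\ref{lemma:tenorisationinfo} introduces no logarithm. The $\log m$ enters through the concentration radius $a$ needed to keep the escape probability small across $m$ machines, and it propagates into both the $\max\{C/(B\log m),1\}$ branch (via $\alpha^2\propto a^2\propto E\log m$) and the $m/\log m$ branch (via the admissibility constraint $\delta_B^2\propto 1/(E S_h\lambda_{\min}\log m)$). Your concern (ii) is therefore not a bookkeeping detail but the heart of the argument you are missing.
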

\begin{proof}(sketch): We present a sketch of proof here and defer the full proof to Appendix \ref{appendix:lvfmdp}. 
We proceed similarly as in the proof of Theorem \ref{th:contextMABlb} and set the $k$th component of feature vector for $j$th state to $c_{j,k} = \sqrt{\lambda_{\textrm{min}}  S_h} \mathbb{1}_{k=j}$ for the first $ S_h$ states and we set feature vectors for remaining $n- S_h$ states to zero vectors. We consider the following Markov Decision Process: starting randomly in one of the states $j$ at level $h$, the agent chooses one of two actions. Choosing the "good" action causes the agent the receive a reward of $R_{\textrm{max}}$ for the remaining $(H-h)$ steps until the episode ends, while the "bad" action causes the agent to receive a reward of $-R_{\textrm{max}}$ until the end. The agent draws a random variable $p_{j} \sim \textrm{Bernoulli}(\frac{1}{2} + \frac{\vc_{j}^T\vtheta}{2(H-h)R_{\textrm{max}}})$ and selects "good" action if $p_{j} = 1$ and "bad" action otherwise. We follow similar steps as in Theorem \ref{th:contextMABlb} and bound the likelihood ratio, however, under the condition that $\sum_{l=1}^{E} (2p_{j}^l - 1) < a$, where $a$ is a quantity we control. We then use Hoeffding inequality to bound the probability that $\sum_{l=1}^{E} (2p_{j}^l - 1) > a$. We then derive a bound on KL-divergence and use Lemma \ref{lemma:kl_infobound} to derive a second bound on mutual information. We finish the proof by combining two bounds and choosing such $\delta$ and $a$ that the bound is tightest. 
\end{proof}
Hence we get that for the risk not to depend on communication budget we need a number of at least $B > \Omega(\frac{C}{\log m})$ bits. It might be surprising that the bound decreases as the state space $ S$ increases, however, because of Assumption \ref{as:episodesforstate} we have that each new state effectively increases the number of available samples for the same number of features. Note that in practice, to obtain a sensible approximation, the dimension of features is likely to grow with the size of state space.

We now show that this class of problems can be efficiently solved by using performing distributed Monte-Carlo return estimates. We present Algorithm \ref{alg:mcvfa}, which uses Monte-Carlo to obtain return estimates and then least-squared to fit the parameters to feature vectors, which are then quantised and communicated to the main centre. We prove that it can match this lower bound with communication budget optimal up to logarithmic factors as stated by Theorem \ref{th:vfmdpalg}.

\begin{algorithm}

\KwData{$\{(s^{l}_{t},r_{t}^{l},a_{t}^{l})_{t=1}^{T_{l}}\}_{l=1}^{N}$ }
On individual machines compute:

\For{$i \in \{1,\dots,m\}$}{
    $\forall_{s \in S}$
    $N_{s} \leftarrow 0$
    $G_{s} \leftarrow 0$
    
    \For{$l \in 1, \dots, N$}{
        \For{$s \in S$}{
            $f_s \leftarrow$ step of first time visit to $s$ in episode $l$
            
            $G_{s} \leftarrow G_{s} + \sum_{t=f_s}^{T_{l}} r_{t}^{l}$
            $N_{s} \leftarrow N_{s} + 1$
        }
    }
    $\forall_{s \in S}$
    $g_{s} \leftarrow \frac{G_{s}}{N_{s}}$
    
    $X \leftarrow (c_{1}, \dots, c_{ S} )^T$ 
    $\vg \leftarrow (g_{1}, \dots, g_{ S})^T$
    
    $\hat{\vtheta}^{i} \leftarrow (X^TX)^{-1}X^T\vg$

    Quantise each component of $\hat{\vtheta}^{i}$ up to precision $P$ and send to central server.
}
At central server compute: $\hat{\vtheta} \leftarrow \frac{1}{m} \sum_{i=1}^{m} \hat{\vtheta}^{i}$

\Return{$\vtheta$ } 
\caption{(MC LSE) Distributed offline least-squares with Monte-Carlo return estimates}
\label{alg:mcvfa}
\end{algorithm}
\begin{restatable}[]{theorem}{vfmdpalg}
\label{th:vfmdpalg}
For any value of $\vtheta$, Algorithm \ref{alg:mcvfa} using transmission with precision $P$ achieves worst-case risk upper bounded as follows:
\begin{equation*}
    W < \mathcal{O}\bigg(C\max \Big \{ \frac{(H-h)^2R^2_{\textrm{max}}}{m S_h E\lambda_{\textrm{min}}} , P \Big \}\bigg)
\end{equation*}
\end{restatable}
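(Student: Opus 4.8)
The plan is to mirror the two-stage argument of Theorem~\ref{th:contextMABalg}: first bound the mean-squared error assuming lossless (infinite-precision) transmission, and then add the error contributed by quantisation.

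\textbf{Unbiasedness.} First I would observe that the first-visit Monte-Carlo return estimate is unbiased. Conditioned on an episode passing through a level-$h$ state $s$ (which can only happen at step $h$), the Markov property makes the subsequent reward sequence a sample from the correct return distribution, so each per-episode return has mean $G_s=\vc_s^T\vtheta$, and hence so does their average $g_s$; note that by Assumption~\ref{as:episodesforstate} the count $N_s=E$ is deterministic, so there is no random denominator to worry about. Therefore each machine's estimate $\hat{\vtheta}^i=(X^TX)^{-1}X^T\vg$ is unbiased for $\vtheta$, and, the machines being i.i.d.\ and mutually independent, so is the server average $\hat{\vtheta}$. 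By the bias--variance decomposition the worst-case risk equals $\tfrac{1}{m}\,\mathbb{E}\bigl[\lVert\hat{\vtheta}^{1}-\vtheta\rVert_2^2\bigr]$, the cross terms in $\mathbb{E}\lVert\tfrac1m\sum_i(\hat{\vtheta}^i-\vtheta)\rVert_2^2$ vanishing by independence and unbiasedness.

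\textbf{Per-machine variance.} Write $M=(X^TX)^{-1}X^T$ and $\Sigma=\mathrm{Cov}(\vg)$, so that $\mathbb{E}\lVert\hat{\vtheta}^1-\vtheta\rVert_2^2=\mathrm{tr}(M^TM\Sigma)\le\lVert\Sigma\rVert_{\mathrm{op}}\,\mathrm{tr}(M^TM)=\lVert\Sigma\rVert_{\mathrm{op}}\,\mathrm{tr}\bigl((X^TX)^{-1}\bigr)\le\lVert\Sigma\rVert_{\mathrm{op}}\,\tfrac{C}{S_h\lambda_{\textrm{min}}}$, where the last inequality uses Assumption~\ref{as:boundedfeature} in the form $X^TX\succeq S_h\lambda_{\textrm{min}}I$ for the level-$h$ feature matrix. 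The key point, which I expect to be the main obstacle, is that $\Sigma$ is in fact \emph{diagonal}: in the episodic MDP each trajectory meets exactly one level-$h$ state, so the returns averaged into $g_s$ and into $g_{s'}$ for distinct states are drawn from disjoint sets of independent episodes, whence $g_s\perp g_{s'}$. Without this structural observation one would lose an extra factor of $S_h$ from bounding $\lVert\Sigma\rVert_{\mathrm{op}}$ by $\mathrm{tr}(\Sigma)$. Each return is bounded in absolute value by $(H-h)R_{\textrm{max}}$, so Popoviciu's inequality gives $\mathrm{Var}(g_s)\le (H-h)^2R_{\textrm{max}}^2/E$ after averaging the $E$ independent episodes; thus $\lVert\Sigma\rVert_{\mathrm{op}}\le (H-h)^2R_{\textrm{max}}^2/E$ and the lossless risk is at most $\tfrac{C(H-h)^2R_{\textrm{max}}^2}{mS_hE\lambda_{\textrm{min}}}$. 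This bound is uniform in $\vtheta$, so it also bounds the supremum over $\mathcal{P}$.

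\textbf{Quantisation.} Finally, exactly as in Theorem~\ref{th:contextMABalg}, quantising each of the $C$ coordinates of $\hat{\vtheta}^i$ to precision $P$ adds at most $O(CP)$ to the squared error, and averaging over machines does not enlarge it. Adding this to the lossless bound and using $a+b\le 2\max\{a,b\}$ yields $W<\mathcal{O}\bigl(C\max\{\tfrac{(H-h)^2R_{\textrm{max}}^2}{mS_hE\lambda_{\textrm{min}}},\,P\}\bigr)$, as claimed; comparing with Theorem~\ref{th:lvfmdp} then shows the achieved risk is optimal and the budget optimal up to logarithmic factors.
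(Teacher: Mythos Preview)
Your proposal is correct and follows the same overall structure as the paper: unbiasedness of the per-machine least-squares fit, Popoviciu's inequality to bound the return variance by $(H-h)^2R_{\textrm{max}}^2/E$, the eigenvalue assumption to control the least-squares variance, averaging over $m$ machines, and then adding the $O(CP)$ quantisation error exactly as in Theorem~\ref{th:contextMABalg}.

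The only noteworthy difference is in presentation of the linear-algebraic step. The paper simply writes ``following similar steps as in the proof of Theorem~\ref{th:contextMABalg}'', which there uses an explicit SVD to bound $\mathrm{Var}(\hat\theta_{k})$ coordinate by coordinate; this implicitly uses that the targets $g_s$ are independent across $s$. You instead bound $\mathrm{tr}(M\Sigma M^T)\le\lVert\Sigma\rVert_{\mathrm{op}}\,\mathrm{tr}((X^TX)^{-1})$ and explicitly argue that $\Sigma$ is diagonal because distinct level-$h$ states are hit by disjoint sets of independent episodes. Your route is slightly cleaner and makes the independence assumption visible rather than leaving it buried in the reference to Theorem~\ref{th:contextMABalg}, but both arguments are essentially the same and yield the identical bound.
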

\begin{proof}
We observe that the return is always within the range $(-(H-h)R_{\textrm{max}},(H-h)R_{\textrm{max}})$, hence by Popoviciu inequality we get 
$
    \textrm{Var}(g^{l}_{s}) \le (H-h)^2R_{\textrm{max}}^2
$ and thus $\textrm{Var}(g_{s}) \le \frac{1}{E} (H-h)^2R_{\textrm{max}}^2 $.
Following similar steps as in the proof of Theorem \ref{th:contextMABalg} we get the statement of the Theorem.
\end{proof}
Hence, the number of bits transmitted $B$ must scale as $\Theta(C\log\frac{m SE\lambda_{\textrm{min}}}{(H-h)R_{\textrm{max}}})$ for Algorithm \ref{alg:mcvfa} to achieve minimax optimal performance. While the episodic setting can be used to model many problems, in applications related to electronic wearables or IoT, we would usually be interested in continuous problems, as by design those devices are meant to accompany user all the time and constantly improve their quality of life. We thus build on the results we derived so far and conduct an analysis of the non-episodic problem setting in the next section. 

\section{Linear Value Function Prediction in Non-episodic MDP}

Contrary to the episodic Markov Reward Processes, in non-episodic setting we asumme all states share the same parameter vector, which defines a linear relationship between the value function and features of a state. In non-episodic setting we introduce a discount factor $\gamma \in (0,1)$, and define the return as a discounted sum of rewards starting from state $s$ and following policy $\pi$ afterwards, i.e. $G_{s} = \mathbb{E}[ \sum_{k=0}^{\infty} r_{t+k}\gamma^k|s_{t}=s] = \vc_s^T\vtheta$ and  $\vtheta$ is the parameter we would like to conduct inference about. We define the received gameplay history as i.i.d. samples of single steps made by the agent, where the initial state $s$ sampled from the stationary distribution $\pi(s)$ under its policy, i.e. $h = \{ ((s_t,a_t,r_t,s_{t+})) \}_{t=1}^{n}$, where $s_t \sim \pi(s)$.
We introduce Assumption \ref{as:mc} to ensure $\pi(s)$ exists.
\begin{assumption} \label{as:mc}
The Markov Chain describing states visited by the agent's policy is aperiodic and irreducible, so that the stationary distribution $\pi$ exists. We assume that for each state $s \in S$ we have $\pi_{\textrm{min}} \le \pi(s) \le \pi_{\textrm{max}}$ and there is at least one state which frequently visited, i.e. $\pi_{\textrm{max}} > 0.01$.
\end{assumption}

\begin{restatable}[]{theorem}{nonepisodiclb}
\label{th:nonepisodiclb}
In a distributed linear value function approximation problem with context dimensionality of $C>12$, state space size of $ S\le C$ and discount factor of  $0 < \gamma < 0.99$, under assumptions \ref{as:boundedreward}, \ref{as:boundedfeature} and \ref{as:mc}, given $m$ processing centres each with gameplay history, with each centre having a communication budget $B \ge 1$, the minimax risk $M$ of any algorithm is lower bounded as:
\begin{equation*}
    M \ge \Omega \Bigg ( \frac{ CR_{\textrm{max}}^{2}}{(1-\gamma)^2 \pi_{\textrm{max}}  Snm\lambda_{\textrm{min}}}  \min \Bigg \{  \max \Big\{\frac{ C\log m}{B}, 1 \Big \}, \frac{m}{\log m} \Bigg \} \Bigg )
\end{equation*}
\end{restatable}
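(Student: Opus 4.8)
The plan is to follow the same information-theoretic recipe used for Theorems~\ref{th:contextMABlb} and~\ref{th:lvfmdp}, but now adapted to the non-episodic, discounted setting, where the key new difficulty is that we no longer control the number of visits to each state and the return is an infinite discounted sum rather than a finite one. First I would construct a hard instance: set the feature vector of the $j$th state to $\vc_j = \sqrt{\lambda_{\textrm{min}} S}\,\1_{k=j}$ for $j \le S$ (so the $S \le C$ features of the linear model line up with states orthogonally), and build a Markov Reward Process in which, upon visiting state $j$, the agent commits to either a ``good'' or ``bad'' rollout, so that from state $j$ it collects reward $\pm R_{\textrm{max}}$ at every subsequent step. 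The discounted return from $j$ is then $\pm R_{\textrm{max}}/(1-\gamma)$, and by drawing the sign from $\textrm{Bernoulli}\big(\tfrac12 + \tfrac{(1-\gamma)\vc_j^T\vtheta}{2R_{\textrm{max}}}\big)$ we get $\mathbb{E}[G_j] = \vc_j^T\vtheta$, matching $v(s) = \vc_s^T\vtheta$. As in the earlier proofs, the observed data reduce to these Bernoulli variables, one per sampled step.

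The second ingredient is to handle the random number of samples per state. Since $s_t \sim \pi(s)$ i.i.d.\ with $\pi(s) \le \pi_{\textrm{max}}$, a machine with $n$ samples sees state $j$ at most roughly $n\pi_{\textrm{max}}$ times with high probability; I would use a Chernoff/Hoeffding bound on the count $N_j = \sum_{t=1}^n \1_{s_t=j}$ to say that $N_j \le a'$ for some $a' = \Theta(n\pi_{\textrm{max}})$ except with exponentially small probability, and condition on this event exactly as Theorem~\ref{th:lvfmdp} conditions on $\sum (2p_j^l-1) < a$. On that event, the log-likelihood ratio of the data for state $j$ under $v_{j,k}$ versus $-v_{j,k}$ is bounded by something of the form $\exp\{\Theta(a' \sqrt{S\lambda_{\textrm{min}}}\,(1-\gamma)\,\delta v_k / R_{\textrm{max}})\}$, which lets me invoke the likelihood-ratio information bound (Lemma~\ref{lemma:yuchen_infobound}) with $\alpha$ of that order; the probability of the bad event contributes a negligible additive term. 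Combining this with the KL-divergence bound for Bernoullis (Lemma~\ref{lemma:kl_infobound}) gives the two competing upper bounds on $I(V,Y)$ — one scaling with the budget $B$, one with the ambient dimension $C$ — and hence $I(V,Y) \le \Theta\big(\tfrac{\delta^2 (1-\gamma)^2 n \pi_{\textrm{max}} \lambda_{\textrm{min}} m}{R_{\textrm{max}}^2}\big) \min\{B, C\log m\}$, where the extra $\log m$ relative to the bandit case comes from the union bound over machines in the Chernoff step, as in Theorem~\ref{th:lvfmdp}. Plugging this into the minimax machinery of Appendix~\ref{appendix:minimaxlb} and optimising over $\delta$ (and truncating by $m$ via the min operator so the bound never beats single-machine performance) yields the claimed $\Omega\!\big(\tfrac{CR_{\textrm{max}}^2}{(1-\gamma)^2\pi_{\textrm{max}} Snm\lambda_{\textrm{min}}} \min\{\max\{\tfrac{C\log m}{B},1\}, \tfrac{m}{\log m}\}\big)$.

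The main obstacle, and the place where genuine care is needed, is the interaction between the random sampling of states and the conditioning argument. Unlike the episodic case where Assumption~\ref{as:episodesforstate} fixes exactly $E$ visits per state, here the visit counts are coupled random variables (they sum to $n$) with only one-sided control from $\pi_{\textrm{max}}$, and the return itself is unbounded before truncation — so I must be careful that the reward sequence is still a clean function of a single Bernoulli per visited step (which the ``commit to good/bad rollout'' construction ensures) and that the Hoeffding bound on $N_j$ is applied uniformly over the $S$ states and $m$ machines without degrading the constant in the exponent. A secondary subtlety is verifying that the constructed chain genuinely satisfies Assumption~\ref{as:mc} (aperiodic, irreducible, with $\pi_{\textrm{max}} > 0.01$) while still forcing the orthogonal-feature structure; this can be arranged by adding a light mixing component to the transition kernel that does not affect the reward mechanics. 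Once these bookkeeping issues are settled, the optimisation over $\delta$ is routine and parallels the earlier two theorems. The full argument is deferred to the appendix.
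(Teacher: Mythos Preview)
Your information-theoretic outline --- likelihood-ratio bound via Lemma~\ref{lemma:yuchen_infobound}, KL bound via Lemma~\ref{lemma:kl_infobound}, then optimise $\delta$ --- is the right skeleton, but the hard instance you propose does not yield a valid non-episodic MDP. The ``commit to good/bad rollout'' device is borrowed from the episodic proof, where committing for $H-h$ steps is a legitimate action in a finite-horizon process. In the discounted, time-homogeneous setting a commitment that persists forever cannot be encoded in the Markov state without either enlarging the state space (destroying the orthogonal feature structure and the $S$-state count) or making the chain reducible (violating Assumption~\ref{as:mc}). Your suggested patch --- a light mixing component ``that does not affect the reward mechanics'' --- is exactly where the argument fails: any mixing changes the value function through the Bellman equation, so the identity $v(j)=\vc_j^T\vtheta$ no longer holds with your Bernoulli parameter $\tfrac12+\tfrac{(1-\gamma)\vc_j^T\vtheta}{2R_{\textrm{max}}}$, and the reduction to coin flips collapses.

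The paper's construction is different and addresses this head-on. It builds a one-parameter family of MDPs in which state $j$ stays put with probability $1-p$ (receiving a random reward $r_j$) and otherwise jumps uniformly to another state (receiving a deterministic reward $r_0$); the value $r_0$ is chosen precisely so that the cross-state terms in the Bellman equation cancel, forcing $v(j)=\vc_j^T\vtheta$ to hold exactly for every $p>0$. The per-step Bernoulli then carries a factor $\tfrac{1-\gamma+p\gamma}{1-p}$ rather than $1-\gamma$, and the $(1-\gamma)^{-2}$ in the final bound appears only after taking the supremum over this family as $p\to 0$ --- not from a single fixed instance. (The hypothesis $\gamma<0.99$ and an extra scaling on $r_j$ are used here to keep $|r_0|\le R_{\textrm{max}}$.) A smaller methodological difference: the paper does not separately bound the visit count $N_j$; it applies Hoeffding directly to the signed sum $\sum_{l=1}^{N_k}(2p_k^l-1)$ padded to $n$ terms with zeros, absorbing the randomness of $N_k$ in a single step. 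Your two-stage conditioning could be made to work, but only once the underlying MDP is correctly specified.
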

\begin{proof} (sketch):
We present a sketch of proof here and defer the full proof to Appendix \ref{appendix:nonepisodiclb}.\\
We consider the following MDP: from each state $j$, regardless of the agent's action it either states in the same state with probability $1-p$ and receives a reward with a mean of $\bar{r}_{j}$ or moves to any other states chosen with a probability of $\frac{p}{ S-1}$ and receives a reward $r_{0}$. We see that because of the consistency equation for value functions we have:
\begin{equation*}
    v(j) = (1-p)\bar{r}_{j} + pr_{0} + \gamma \sum_{s' \in S\\\{j\}} v(s') \frac{p}{ S-1} +\gamma v(j)(1-p)
\end{equation*}

We now set $r_{0} = - \gamma \sum_{s' \in S\\\{j\}} \frac{v(s')}{ S-1} $ to get that 
$
    \bar{r}_{j} = \frac{v(j)(1- \gamma + p\gamma)}{1-p}
$.
We set feature vectors and parameters in the same way as in the proof of Theorem \ref{th:lvfmdp}. We also set $\bar{r}_{j} = (2p_{j} - 1)R_{\textrm{max}}$ where $p_{j} \sim \textrm{Bernoulli}(\frac{1}{2} + \frac{\sqrt{\lambda_{\textrm{min}} S}v_{k}\delta}{2R_{\textrm{max}}} \frac{1 - \gamma + p\gamma}{1 - p})$. We thus have a similar situation as in the proof of Theorem \ref{th:lvfmdp}, where the data received can be reduced to outcomes of Bernoulli trials. We proceed in the same way to obtain a bound that depends on $p$ and gets tighter as $p \to 0$. We now can index data generating distribution by $p$ and obtain their supremum, which gives the statement of the Theorem.
\end{proof}
While we can extend the lower bound to the non-episodic case while following a similar method as in the episodic one, we see that because Monte-Carlo estimators for return cannot be used in the non-episodic setting, we cannot extend Algorithm \ref{alg:mcvfa} in the same way. In this new setting, an algorithm needs to utilise the consistency of the value function. We thus propose Algorithm \ref{alg:td}, which is a distributed variant of temporal difference learning. We build on the analysis conducted by \cite{bhandari2018finite} to upper bound its worst-case performance in Theorem \ref{th:td}.
\begin{algorithm}[t!]

\KwData{$(\vc_{t},r_{t},a_{t},\vc_{t+})_{t=1}^{T}$ }
On individual machines compute:

\For{$i \in \{1,\dots,m\}$}{
    $\hat{\vtheta}^{i} \leftarrow \vtheta_{0}$
    
    \For{$t \in 1, \dots, N$}{
        $\vg_{t} \leftarrow (r_{t} + \gamma \vc_{t+}^T\hat{\vtheta}^{i} - \vc_{t}^T\hat{\vtheta}^{i})\vc_{t}$  
        
        $\hat{\vtheta}^{i} \leftarrow \hat{\vtheta}^{i} + \alpha_{t}\vg_{t}$
    }

    Quantise each component of $\hat{\vtheta}^{i}$ up to precision $P$ and send to central server.
}
At central server compute: $\hat{\vtheta} \leftarrow \frac{1}{m} \sum_{i=1}^{m} \hat{\vtheta}^{i}$ 

\Return{$\hat{\vtheta}$ }
\caption{(TD) Distributed offline temporal difference learning}
\label{alg:td}
\end{algorithm}
\begin{restatable}[]{theorem}{td}
\label{th:td}
The worst-case risk of Algorithm \ref{alg:td} run with a learning rate of $\alpha_t = \frac{\beta}{\Lambda + \frac{t}{\omega} }$ with $\beta = \frac{2}{(1 - \gamma)\omega}$ and $\Lambda = \frac{16}{(1 - \gamma)^2 \omega}$
is upper bounded as follows:
\begin{equation*}
    W \le \mathcal{O} \Bigg ( \max \Bigg \{ \frac{\max \{ \frac{ R_{\textrm{max}}^2}{S\pi_{\textrm{min}} \lambda_{\textrm{min}} m},  \lVert \vtheta - \frac{1}{m} \sum_{i=1}^{m}\hat{\vtheta}_{0}^i \rVert_2^2 \}}{1 + (1 - \gamma)^2 n},  CP \Bigg \} \Bigg ) 
\end{equation*}
\end{restatable}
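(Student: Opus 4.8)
The plan is to first analyse the lossless version of Algorithm \ref{alg:td} (infinite precision), then account for the averaging over machines, and finally add the quantisation error exactly as in the proof of Theorem \ref{th:contextMABalg}. On a single machine $i$ the TD iterate satisfies the affine stochastic recursion $\hat{\vtheta}^i_{t+1} = (\mI - \alpha_t\mA_t)\hat{\vtheta}^i_t + \alpha_t\vb_t$ with $\mA_t = \vc_t(\vc_t - \gamma\vc_{t+})^{\top}$ and $\vb_t = r_t\vc_t$; since the gameplay history consists of i.i.d.\ single steps drawn under the stationary distribution, $\mathbb{E}[\mA_t] = \bar{\mA}$, $\mathbb{E}[\vb_t] = \bar{\vb}$, and $\bar{\mA}\vtheta = \bar{\vb}$. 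The schedule $\alpha_t = \beta/(\Lambda + t/\omega)$ with $\beta = 2/((1-\gamma)\omega)$ and $\Lambda = 16/((1-\gamma)^2\omega)$ is precisely the decaying step size for which \cite{bhandari2018finite} establish an $\mathcal{O}(1/t)$ bound, so I would invoke their result with $\omega$ a lower bound on the smallest eigenvalue of the $D$-weighted feature Gram matrix $X^{\top}DX = \sum_s \pi(s)\vc_s\vc_s^{\top}$. Assumptions \ref{as:boundedfeature} and \ref{as:mc} give $X^{\top}DX \succeq \pi_{\textrm{min}} X^{\top}X \succeq \pi_{\textrm{min}} S\lambda_{\textrm{min}}\mI$, so one may take $\omega = S\pi_{\textrm{min}}\lambda_{\textrm{min}}$, and Assumptions \ref{as:boundedreward}, \ref{as:boundedfeature} bound the second moment of the TD noise at the fixed point by $\sigma^2 = \mathcal{O}(R_{\textrm{max}}^2)$. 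This yields, for each machine, $\mathbb{E}[\lVert\hat{\vtheta}^i_n - \vtheta\rVert_2^2] \le \mathcal{O}\!\big((\lVert\hat{\vtheta}^i_0 - \vtheta\rVert_2^2 + \sigma^2/\omega)/(1 + (1-\gamma)^2 n)\big)$.

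The second step is to lift this bound to $\hat{\vtheta} = \tfrac1m\sum_{i=1}^m\hat{\vtheta}^i$. Unrolling the recursion and using $\bar{\mA}\vtheta = \bar{\vb}$ gives $\mathbb{E}[\hat{\vtheta}^i] - \vtheta = \big(\prod_{t\le n}(\mI - \alpha_t\bar{\mA})\big)(\hat{\vtheta}^i_0 - \vtheta)$, hence $\mathbb{E}[\hat{\vtheta}] - \vtheta = \big(\prod_{t\le n}(\mI - \alpha_t\bar{\mA})\big)\big(\tfrac1m\sum_i\hat{\vtheta}^i_0 - \vtheta\big)$; the same contraction estimate that produces the $1/(1+(1-\gamma)^2 n)$ decay in the one-machine bound then shows the squared bias of $\hat{\vtheta}$ is $\mathcal{O}\!\big(\lVert\vtheta - \tfrac1m\sum_i\hat{\vtheta}^i_0\rVert_2^2 / (1 + (1-\gamma)^2 n)\big)$. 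Since the $m$ runs are independent and the initialisation is deterministic, $\textrm{Var}(\hat{\vtheta}) = \tfrac1m\textrm{Var}(\hat{\vtheta}^1)$, and $\textrm{Var}(\hat{\vtheta}^1)$ is at most the noise part of the single-machine bound, i.e.\ $\mathcal{O}\!\big(R_{\textrm{max}}^2 / (S\pi_{\textrm{min}}\lambda_{\textrm{min}}(1 + (1-\gamma)^2 n))\big)$. Adding the squared bias and the variance through the bias--variance decomposition, and bounding a sum of two nonnegative quantities by twice their maximum, produces the first argument of the outer maximum in the statement.

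Finally, quantising each of the $C$ coordinates of every $\hat{\vtheta}^i$ to precision $P$ perturbs the averaged estimate by a vector of Euclidean norm $\mathcal{O}(\sqrt{C}P)$ in the worst case (no cancellation across machines), contributing $\mathcal{O}(CP)$ to the mean squared error; combining this with the lossless bound via $\lVert a + b\rVert_2^2 \le 2\lVert a\rVert_2^2 + 2\lVert b\rVert_2^2$ and taking the maximum gives the theorem, exactly as in Theorem \ref{th:contextMABalg}. I expect the main obstacle to be the second step: one has to exploit the affine structure of linear TD to argue that the \emph{bias} does not enjoy the $1/m$ averaging effect (it collapses only to the bias of the mean initialisation $\tfrac1m\sum_i\hat{\vtheta}^i_0$), whereas the purely stochastic fluctuation does, and one must control the operator $\prod_{t\le n}(\mI - \alpha_t\bar{\mA})$ with the step-size bookkeeping of \cite{bhandari2018finite}. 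Care is needed because $\bar{\mA}$ is not symmetric, so the contraction must first be measured in the $D$-weighted inner product, in which linear TD is well-behaved, before being converted back to the Euclidean norm via $\lambda_{\min}(X^{\top}DX) \ge S\pi_{\textrm{min}}\lambda_{\textrm{min}}$.
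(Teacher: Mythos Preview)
Your approach is correct in outline but takes a genuinely different route from the paper. The paper does not analyse each machine separately and then combine via bias--variance. Instead it introduces the \emph{virtual averaged iterate} $\bar{\vtheta}_t := \frac{1}{m}\sum_{i} \hat{\vtheta}^i_t$, observes that $\bar{\vtheta}_{t+1} = \bar{\vtheta}_t + \frac{\alpha_t}{m}\sum_i \vg_t^i$, and derives a single Lyapunov recursion directly on $\mathbb{E}[\lVert\vtheta - \bar{\vtheta}_t\rVert_2^2]$. The $1/m$ variance reduction appears when bounding the conditional second moment of $\frac{1}{m}\sum_i \vg_t^i$, while the cross term $-\frac{2\alpha_t}{m^2}\sum_{i,j}(\vtheta - \hat{\vtheta}^j_t)^T\mathbb{E}[\vg_t^i\mid\hat{\vtheta}^i_t]$ is handled by a cross-machine extension of the Bhandari et al.\ negative-drift lemma (Lemma~\ref{lemma:TDgradparam}) together with the triangle-inequality step of Lemma~\ref{lemma:TDtriangle}, which collapses $\frac{1}{m^2}\sum_{i,j}\lVert\vtheta-\hat{\vtheta}^i_t\rVert_2\lVert\vtheta-\hat{\vtheta}^j_t\rVert_2$ back to $\lVert\vtheta - \bar{\vtheta}_t\rVert_2^2$. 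The recursion is then closed by induction with $\nu = \max\{\beta^2 R_{\textrm{max}}^2/m,\ \Lambda\lVert\vtheta - \bar{\vtheta}_0\rVert_2^2\}$, and the quantisation term $CP$ is added at the end exactly as you propose.

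Your bias--variance route is a legitimate alternative and makes the mechanism (bias does not average across machines, variance does) more explicit. There is, however, a gap in the claim that ``$\textrm{Var}(\hat{\vtheta}^1)$ is at most the noise part of the single-machine bound''. The TD recursion carries \emph{multiplicative} noise through the random $\mA_t$, so the fluctuation $\hat{\vtheta}^1_t - \mathbb{E}[\hat{\vtheta}^1_t]$ satisfies a recursion that contains the term $\alpha_t(\bar{\mA}-\mA_t)(\mathbb{E}[\hat{\vtheta}^1_t]-\vtheta)$, whose size depends on the current bias and hence on $\lVert\vtheta_0-\vtheta\rVert_2$; the variance is therefore not governed solely by $\sigma^2$. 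The fix is minor: use the trivial bound $\textrm{Var}(\hat{\vtheta}^1)\le \mathbb{E}[\lVert\hat{\vtheta}^1-\vtheta\rVert_2^2]$ and plug in the full single-machine MSE bound. This introduces an extra $\lVert\vtheta_0-\vtheta\rVert_2^2/m$ in the variance contribution, which (since all machines share $\vtheta_0$) is dominated by the bias term and disappears inside the outer $\max$. The paper's averaged-iterate approach sidesteps this entirely by never splitting MSE into bias and variance; your approach buys modularity (near black-box reuse of the one-machine analysis and a clean affine-in-expectation argument for the bias) at the cost of this small extra bookkeeping and the separate control of $\prod_t(\mI-\alpha_t\bar{\mA})$ that you already flagged.
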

\begin{proof}(sketch):
We present a sketch of proof here and defer the full proof to Appendix \ref{appendix:td}.\\
We define $\bar{\vtheta}_{t} := \frac{1}{m}\sum_{i=1}^{m} \hat{\vtheta}^{i}_{t}$ to be the average vector from all machines at timestep $t$. Note that this vector is never actually created, except for the last step when we average all final results. 
We can see that it must satisfy the following recursive relation:
\begin{equation*}
    \bar{\vtheta}_{t+1} = \frac{1}{m} \sum_{i=1}^{m} \hat{\vtheta}_{t+1}^{i} = \frac{1}{m} \sum_{i=1}^{m}[\hat{\vtheta}_{t}^{i} + \alpha_{t} \vg_{t}^{i}] = \bar{\vtheta}_{t} + \alpha_{t} \frac{1}{m} \sum_{i=1}^{m}\vg_{t}^{i}
\end{equation*}
We utilise Lemmas \ref{lemma:TDgradparam} and \ref{lemma:TDtriangle} to show that:
\begin{equation*}
    \mathbb{E}[\lVert \vtheta - \bar{\vtheta}_{t+1}  \rVert_{2}^{2}] \le \mathbb{E}[\lVert \vtheta - \bar{\vtheta}_{t}  \rVert_{2}^{2}] (1 - 2 \alpha_{t} \omega (1 - \gamma) + \alpha_{t}^2  ) + \frac{\alpha_t^2}{m} R_{\textrm{max}}^2
\end{equation*}
where we define $\omega$ to be the smallest eigenvalue of the covariance matrix weighted by the stationary distribution, i.e. smallest eigenvalue of $\sum_{s \in S} \pi(s)\vc_{s}\vc_{s}^T$. 
We then finish the proof by induction and observing that by concavity of eigenvalues we have $S \pi_{\textrm{min}} \lambda_{\textrm{min}} \le \omega \le \lambda_{\textrm{min}} \le 1$. 
\end{proof}
We see that there are essentially two terms contributing to the worst-case risk, the term $\frac{ R_{\textrm{max}}^2}{S\pi_{\textrm{min}} \lambda_{\textrm{min}} m}$ resulting from gradient's variance and the term $\lVert \vtheta - \frac{1}{m} \sum_{i=1}^{m}\hat{\vtheta}_{0}^i \rVert_2^2$ resulting from initial bias. We see that increasing the number of machines $m$ will only decrease the variance term, but will not decrease the overall worst-case risk if the initial bias is larger. This result has a practical implication, as we essentially show that using more devices within a network is not necessarily going to improve our estimate. It might spuriously appear that this algorithm is contradicting the lower bound as the worst-case risk has no explicit dependence on the dimensionality of the features. However, we can easily show (explicitly derived in Appendix \ref{appendix:tdbias}) that the initial bias will scale at least as $\mathcal{O}(\frac{SR^2}{\lambda_{
\textrm{min}}(1-\gamma)})$, which is consistent with the bound stated in Theorem \ref{th:nonepisodiclb}, as we assumed $C \le S$. 

In comparison to existing distributed versions of TD learning where gradients are communicated between machines at each step, in our version only the final estimates are communicated, hence the initial bias cannot be cancelled by introducing data from more machines. This is, however, due to the difficulty of our problem setting. With one communication round, it is not possible to constantly transmit information about the gradient updates. This issue would come up in practice, whenever the devices we run our algorithm on are not constantly connected to the network and communication in real-time, during the algorithm runtime is not possible.  

It might come as a surprise that although we can easily propose an optimal algorithm for the episodic case, the same is not true for the non-episodic setting. In fact, we can see that for any unbiased algorithm that has optimal risk in a non-distributed version, its distributed version averaging results from all machines will also have optimal risk as long as the lower bound scales as the inverse of the number of machines $m$. In the non-episodic setting, however, we cannot directly obtain estimates of the return and hence need to rely on a method utilising the temporal structure of the problem. Although in the non-distributed version, the temporal difference can often be superior to Monte-Carlo methods even in the episodic case due to smaller variance, we see that in a single-round communication setting, the bias is preventing the algorithm from efficiently utilising data available on all machines.

\section{Discussions}
We have studied three offline RL problems in a special case of distributed processing. 
We have identified a lower bound on minimax risk in each problem and proposed algorithms that match these lower bound up to logarithmic factors in the cases of contextual linear bandits and episodic MDP. In the case of non-episodic MDP, we have proposed a distributed version of temporal difference learning and analysed its worst-case risk. We have shown that its worst-case risk decreases with the number of cooperating machines only until some point, where the risk due to initial bias starts to dominate over the risk caused by variance. 

Notably, our work studies a different problem from Federated Learning (\cite{konevcny2015federated}) wherein a typical setting assumes one can send parameter values to the central server while minimising the total number of communication rounds. In our setting, we assume more restrictive conditions with regards to communication by limiting the communication round to be only one and by further restricting the number of information (i.e., the bits) each machine can send. 

Moreover, we would like to emphasise that our bounds are developed on the risk in estimating the parameter values rather than on the regret \citep{wang2019distributed}. We note that there are problems where learning the parameter values might be of greater interest than just minimising the regret of actions. Recalling the example of physical activity suggestion in the Introduction, we not only want to learn the optimal way of suggesting the activities, but also analyse which factors influence how healthy the lifestyle of an individual is. Hence, it would be more beneficial to estimate the parameters of, for example, the value function and decrease the risk of those estimates. 

We are aware that a different lower bound has been developed by \cite{wang2019distributed}, who derived a regret lower bound for a distributed version of online multi-armed bandits, and also developed algorithms that can match this lower bound up to logarithmic factors. Yet, they define the communication cost as the number of values transmitted rather than the number of bits, which unfortunately ignores the fact that real communication can only occur with finite precision. 
\cite{huang2021federated} built on the results of \cite{wang2019distributed}, and developed a lower bound for linear contextual bandits. Their lower bound, however, is developed regardless of the communication budget and hence does not show the important trade-off between performance and communication complexity. 

In this paper, we have derived the first lower bound for distributed RL problems other bandits. Our method relies on constructing hard instances and converting them to statistical inference problems.
This approach can be easily applied to other settings such as state-action value estimation or off-policy learning, which constitutes one of the directions of future work. We also identified a weakness of TD learning resulting from its initial bias. Can this bias be estimated in practice so that we do not unnecessarily utilise data from more machines than needed? Can we correct for initial bias so that worst-case risk always decreases with more machines? Can it match the derived lower bound? We leave these questions open to future research.

\bibliography{references}

\clearpage
\appendix

\section{General methods for deriving lower bounds} \label{appendix:minimaxlb}
Let us define a set $\mathcal{V} = \{-1,1 \}^{d}$ and sample $V$ uniformly from $\mathcal{V}$. Conditioned on $V = v$ we sample $X_1$ from $P_{X_1}(\cdot|V=v)$ so that $\theta_{v} = \theta(P_{x}(\cdot|v)) = \delta v$ where $\delta > 0$ is a fixed quantity we control. Consider a Markov chain $V \to X_{1} \to \dots \to X_{n} \to Y \to \hat{\theta}$. Under these conditons, it was proven by \cite{NIPS2013Yuchen} that for $d>12$ we have:
\begin{equation*}
    M(\theta, P, B) \ge \delta^{2} \Big (\floor{\frac{d}{6}} + 1 \Big ) \Bigg ( 1 - \frac{I(Y,V) + \log2}{\frac{d}{6}} \Bigg )
\end{equation*}
Hence, if we can choose a $\delta$ such that $I(Y,V)$ is upper bounded, this directly translates to a lower bound on the minimax risk. It hence remains to upper bound the mutual information any possible message $Y$ can carry about $\bm{v}$, so that we can use it to obtain a lower bound on minimax risk. To that goal, we will use the inequalities described in subsequent subsections and combine them into one upper bound on mutual information. After that, it remains to find such a $\delta$ that this upper bound remains constant or is at most linear in the dimensionality of the problem.

\subsection{Tensorisation of information} 
\begin{lemma}[Tensorisation of information property in \cite{NIPS2013Yuchen}] \label{lemma:tenorisationinfo}
If $V$ is a parameter of distribution we wish to make inference about and the message $Y_{i}$ send by the $i$th  processing centre is constructed based only on the data $X_{i}$ then the following is true:
\begin{equation*}
    I(V;Y^{1:m}) \le \sum_{i=1}^{m} I(V; Y_{i})
\end{equation*}

\end{lemma}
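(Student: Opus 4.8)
The plan is to derive the bound from two standard facts — the chain rule for mutual information and ``conditioning cannot increase entropy'' — once we record the one structural property that makes the argument work: under an \emph{independent} communication protocol the messages $Y^{1},\dots,Y^{m}$ are conditionally independent given $V$. This is immediate from the setup, since conditioned on $V=v$ the local histories $X_{1},\dots,X_{m}$ are generated independently, and $Y^{i}$ is a (possibly randomised, with private randomness independent across machines) function of $X_{i}$ alone; hence $Y^{i}$ is independent of $(Y^{j})_{j\neq i}$ once $V$ is fixed. This is exactly where the hypothesis ``$Y^{i}$ is constructed based only on $X_{i}$'' enters.

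First I would apply the chain rule to split the joint information into per-machine contributions,
\begin{equation*}
 I(V;Y^{1:m}) = \sum_{i=1}^{m} I\big(V;\,Y^{i}\mid Y^{1:i-1}\big),
\end{equation*}
and rewrite each summand as a difference of (discrete, since each message is at most $B$ bits long) conditional entropies,
\begin{equation*}
 I\big(V;\,Y^{i}\mid Y^{1:i-1}\big) = H\big(Y^{i}\mid Y^{1:i-1}\big) - H\big(Y^{i}\mid V,\,Y^{1:i-1}\big).
\end{equation*}
By the conditional independence noted above, $H(Y^{i}\mid V, Y^{1:i-1}) = H(Y^{i}\mid V)$; and because conditioning does not increase entropy, $H(Y^{i}\mid Y^{1:i-1}) \le H(Y^{i})$. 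Combining the two gives $I(V;Y^{i}\mid Y^{1:i-1}) \le H(Y^{i}) - H(Y^{i}\mid V) = I(V;Y^{i})$, and summing over $i$ yields the claim.

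There is no serious obstacle: the result is the ``tensorisation of information'' inequality of \cite{NIPS2013Yuchen}, and the only point needing a line of care is the justification of the conditional independence of the messages from the formal definition of an independent protocol. If one prefers to avoid entropies entirely (e.g.\ for messages valued in general spaces), the same computation can be run through the Kullback--Leibler representation $I(V;Y^{1:m}) = \mathbb{E}_{V}\big[ D\big(P_{Y^{1:m}\mid V}\,\big\|\,P_{Y^{1:m}}\big)\big]$ using the chain rule for relative entropy and the factorisation $P_{Y^{1:m}\mid V} = \bigotimes_{i=1}^{m} P_{Y^{i}\mid V}$.
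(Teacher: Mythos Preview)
Your argument is correct and is exactly the standard proof of this inequality: chain rule for mutual information, conditional independence of the messages given $V$ (which is the substantive use of the ``independent protocol'' hypothesis), and the fact that conditioning does not increase entropy. Each step is justified and there is no gap.

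As for comparison with the paper: the paper does not actually prove this lemma. It simply states it and attributes it to \cite{NIPS2013Yuchen}; there is no accompanying proof in the paper itself. So your write-up supplies an argument where the paper only gives a citation. The proof you give is the same as the one in the cited reference (Zhang et al.), so in that sense your approach coincides with the intended one.
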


\subsection{Bound on mutual information by KL-divergence} \label{appendix:klbound}
Within this section, we present another bound on mutual information, which utilises the KL-divergence of data-generating distribution.
\begin{lemma} \label{lemma:kl_infobound}
Let $V \to X^{i} \to Y^{i}$ form a Markov Chain and $Y = (Y^{1},\dots, Y^{m})$. Let $V$ be a $d$-dimensional vector with each component sampled uniformly from $\{v_{j},v_{j}^*\}$. Let each $X^{i}$ consist of $(R_{1},\dots,R_{d})$ where every $R_{1} = (R_{1}^{1},\dots,R_{1}^{n})$. Let every $R_{j}^{l}$ be sampled from the same distribution parametrised by $v_{j}$. Then:
\begin{equation*}
   I(Y,V) \le  \frac{mn}{4} \sum_{j=1}^{d} \big ( \textrm{KL}[p(R_{j}^{1}|v),p(R_{j}^{1}|v^*)] + \textrm{KL}[p(R_{j}^{1}|v^*),p(R_{j}^{1}|v)] \big)
\end{equation*}
\end{lemma}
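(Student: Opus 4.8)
The plan is to assemble three standard information-theoretic facts: the tensorization of information across machines (Lemma~\ref{lemma:tenorisationinfo}), a convexity inequality that upper bounds mutual information by an averaged pairwise KL-divergence, and the data-processing inequality; the product structure of the data then reduces everything to a single sample.

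First, by Lemma~\ref{lemma:tenorisationinfo} we have $I(V;Y) = I(V;Y^{1:m}) \le \sum_{i=1}^{m} I(V;Y^i)$, so it suffices to show, for each fixed machine $i$,
\begin{equation*}
I(V;Y^i) \le \frac{n}{4}\sum_{j=1}^{d}\Bigl(\mathrm{KL}\bigl(p(R_j^1|v)\,\big\|\,p(R_j^1|v^*)\bigr) + \mathrm{KL}\bigl(p(R_j^1|v^*)\,\big\|\,p(R_j^1|v)\bigr)\Bigr),
\end{equation*}
and then sum over $i$. Fix $i$, drop the superscript, and let $V'$ be an independent copy of $V$. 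Since the marginal $P_Y$ equals the $V'$-mixture $\mathbb{E}_{V'}[P_{Y|V'}]$ and $\mathrm{KL}$ is convex in its second argument, Jensen's inequality gives $I(V;Y) = \mathbb{E}_V[\mathrm{KL}(P_{Y|V}\,\|\,P_Y)] \le \mathbb{E}_{V,V'}[\mathrm{KL}(P_{Y|V}\,\|\,P_{Y|V'})]$. Because $V \to X \to Y$ is a Markov chain, the data-processing inequality for KL-divergence yields $\mathrm{KL}(P_{Y|V=v}\,\|\,P_{Y|V=v'}) \le \mathrm{KL}(P_{X|V=v}\,\|\,P_{X|V=v'})$ for every pair $v,v'$.

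Next I would exploit the product structure. Conditioned on $V$, the blocks $R_1,\dots,R_d$ are independent with $R_j$ depending only on the $j$th coordinate of $V$, so $\mathrm{KL}(P_{X|v}\,\|\,P_{X|v'}) = \sum_{j=1}^{d}\mathrm{KL}(P_{R_j|v_j}\,\|\,P_{R_j|v_j'})$; and since $R_j = (R_j^1,\dots,R_j^n)$ is i.i.d.\ given $V_j$, each summand equals $n\,\mathrm{KL}(P_{R_j^1|v_j}\,\|\,P_{R_j^1|v_j'})$. Taking the expectation over $V,V'$, whose $j$th coordinates are independent and each uniform on $\{v_j,v_j^*\}$, annihilates the two "diagonal" terms (the KL of a distribution against itself is zero) and leaves $\tfrac14\bigl(\mathrm{KL}(p(R_j^1|v)\,\|\,p(R_j^1|v^*)) + \mathrm{KL}(p(R_j^1|v^*)\,\|\,p(R_j^1|v))\bigr)$ per coordinate. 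Combining the pieces gives the per-machine bound, and summing over the $m$ machines gives the lemma. The only step needing mild care is the convexity/data-processing chain — checking that $P_Y$ is genuinely the $V'$-mixture of the conditionals, and that it is the Markov property $V \to X^i \to Y^i$ (not a joint dependence on all machines' data) that licenses the per-machine data-processing step; everything else is bookkeeping with the product measure.
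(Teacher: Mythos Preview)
Your proof is correct and follows essentially the same approach as the paper: tensorization across machines, data processing to pass from $Y^i$ to $X^i$, decomposition over the independent coordinates and i.i.d.\ samples, and convexity of KL to produce the factor $\tfrac{1}{4}$. The only cosmetic difference is the ordering --- the paper first applies data processing at the mutual-information level ($I(V;Y^i)\le I(V;X^i)$), then decomposes $I(V;X^i)$ via the chain rule and ``conditioning reduces entropy'' before invoking convexity on a single sample, whereas you apply convexity at the $Y$-level first and then the KL data-processing inequality; the two routes are equivalent.
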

\begin{proof}
From tensorisation of information (Lemma \ref{lemma:tenorisationinfo}) and data-processing inequality we get:
\begin{equation*}
    I(Y,V) \le m I(Y^{i},V) \le m I(X^{i},V)
\end{equation*}
We now observe that $R_{j}$ are independent, hence:
\begin{equation*}
    I((R_{1},\dots,R_{d}),V)  = \sum_{j=1}^{d} I((R_{j}^{1},\dots,R_{j}^{n}),V) = \sum_{j=1}^{d} \sum_{l=1}^{n} I(R_{j}^{l},V|R_{j}^{1:l-1}) = 
\end{equation*}
\begin{equation*}
    = \sum_{j=1}^{d} \sum_{l=1}^{n} [H(R_{j}^{l}|R_{j}^{1:l-1}) - H(R_{j}^{l}|V, R_{j}^{1:l-1})] \le \sum_{j=1}^{d} \sum_{l=1}^{n} [H(R_{j}^{l}) - H(R_{j}^{l}|V)] = n \sum_{j=1}^{d} I(R_{j}^{1},V)
\end{equation*}
Where the last inequality is true since conditioning can only reduce entropy  and $R_{j}^{i}$ are independent given $V$. We know focus on the mutual information:
\begin{equation*}
    I(R_{j}^{1},V) = \textrm{KL}[p(R_{j}^{1},V),p(R_{j}^{1})p(V)] = \textrm{KL}[p(R_{j}^{1}|V)p(V),p(R_{j}^{1})p(V)]
\end{equation*}
\begin{equation*}
     = \sum_{v}p(v)\textrm{KL}[p(R_{j}^{1}|v),p(R_{j}^{1})] =  \sum_{v}p(v)\textrm{KL}[\sum_{v'}p(R_{j}^{1}|v)p(v'),\sum_{v'}p(R_{j}^{1}|v')p(v')]
\end{equation*}
Because of convexity of KL-divergence we get:
\begin{equation*}
     I(R_{j}^{1},V) \le  \sum_{v,v'}p(v)p(v')\textrm{KL}[p(R_{j}^{1}|v),p(R_{j}^{1}|v')] = 
\end{equation*}
\begin{equation*}
    = \frac{1}{4} \big ( \textrm{KL}[p(R_{j}^{1}|v),p(R_{j}^{1}|v^*)] + \textrm{KL}[p(R_{j}^{1}|v^*),p(R_{j}^{1}|v)] \big)
\end{equation*}
\end{proof}

\subsection{Bound on mutual information by set construction} \label{appendix:setbound}
We present a first bound on mutual information, which can requires constructing a set such that the samples belong to it with some probability.
\begin{lemma}[Lemma 4 in \cite{NIPS2013Yuchen}] \label{lemma:yuchen_infobound}
Let V be sampled uniformly from $\{ -1,1\}^{N}$. For any $(i,j)$ assume that $X_{j}^{i}$ is independent of $\{ V_{j'}: j^{'} \neq j \}$ given $V_{j}$. Let $P_{X_{j}}$ be the probability measure of $X_{j}$ and let $B_{j}$ be a set such that for some $\alpha$:
\begin{equation*}
    \sup_{S \in \sigma(B_{j})} \frac{P_{X_{j}}(S|V=v)}{P_{X_{j}}(S|V=v')} \le \exp(\alpha)
\end{equation*}
Define random variable $E_{j} = 1$ f $X_{j} \in B_{j}$ and 0 otherwise. Then
\begin{equation*}
I(V,Y^{i}) \le \sum_{j=1}^{N}H(E_{j}) + \sum_{j=1}^{N}P(E_{j}=0) + 2(e^{4\alpha} - 1)^{2}I(X^{i},Y^{i})
\end{equation*}
Additionally if $\alpha < \frac{1.2564}{4}$, then the following is also true:
\begin{equation*}
    I(V,Y^{i}) \le \sum_{j=1}^{N}H(E_{j}) + \sum_{j=1}^{N}P(E_{j}=0) + 128\alpha^{2}I(X^{i},Y^{i})
\end{equation*}
\end{lemma}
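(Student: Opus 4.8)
The plan is to follow the argument of \cite{NIPS2013Yuchen}, whose heart is a \emph{strong data-processing} estimate: because restricting $X^i_j$ to $B_j$ forces the conditional laws of $X^i_j$ under $V_j=v$ and $V_j=v'$ to agree up to the multiplicative factor $e^{\alpha}$, any downstream message can carry only $O(\alpha^2)$ nats of information about $V_j$ per nat it carries about $X^i_j$. First I would peel off the indicator vector $E=(E_1,\dots,E_N)$: since $E$ is a deterministic function of $X^i$, the chain rule and subadditivity of entropy give
\begin{equation*}
I(V;Y^i)\le I(V;Y^i,E)=I(V;E)+I(V;Y^i\mid E)\le \sum_{j=1}^N H(E_j)+I(V;Y^i\mid E),
\end{equation*}
which already produces the first sum. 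Conditioning on the realised value $E=e$, split the coordinates into the ``good'' set $J(e)=\{j:e_j=1\}$ and its complement; on $J(e)^c$ use the crude bound $I(V_{J(e)^c};Y^i\mid V_{J(e)},E=e)\le H(V_{J(e)^c})\le |J(e)^c|$, and averaging $\sum_e P(E=e)|J(e)^c|$ collapses to $\sum_j P(E_j=0)$, the second sum.

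For a good coordinate $j$, the key observation is that after conditioning on $E=e$ and on $X^i_{\neq j}$ the message $Y^i$ is a randomised function of $X^i_j$ alone, and $X^i_j$ is independent of $V_{\neq j}$ given $V_j$; hence data processing applies coordinate by coordinate and can then be averaged. Writing the relevant conditional mutual information as an averaged symmetrised KL-divergence and applying data processing bounds it by
\begin{equation*}
\tfrac12\,\mathbb{E}\!\left[\textrm{KL}\big[P_{X^i_j\mid V_j=v},P_{X^i_j\mid V_j=v'}\big]+\textrm{KL}\big[P_{X^i_j\mid V_j=v'},P_{X^i_j\mid V_j=v}\big]\right],
\end{equation*}
with all laws further conditioned on $E=e$ and $X^i_{\neq j}$. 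Because of the defining property of $B_j$, carried through the conditioning on $E_j=1$, the relevant likelihood ratio stays in $[e^{-4\alpha},e^{4\alpha}]$; upgrading $\textrm{KL}$ to $\chi^2$ and bounding $\chi^2$ by the squared deviation from $1$ of a likelihood ratio confined to that interval yields the factor $(e^{4\alpha}-1)^2$, hence $2(e^{4\alpha}-1)^2$ after symmetrisation. Summing over $j\in J(e)$, averaging over $E$, and invoking superadditivity of mutual information for the independent inputs $X^i_1,\dots,X^i_N$ (so that $\sum_j I(X^i_j;Y^i\mid\cdots)\le I(X^i,Y^i)$) collapses the per-coordinate channel terms into the single factor $I(X^i,Y^i)$, giving the third term.

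For the second, sharper conclusion one replaces $(e^{4\alpha}-1)^2$ by $64\alpha^2$, using the elementary inequality $e^{4\alpha}\le 1+8\alpha$ valid precisely for $\alpha<1.2564/4$, and keeps the factor $2$ to obtain $128\alpha^2$.

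The main obstacle is the strong-data-processing step: one must verify carefully that conditioning on the event $\{E=e\}$ — which depends on $X^i$ itself, not merely on $V$ — together with conditioning on $X^i_{\neq j}$, preserves both the conditional-independence structure $X^i_j\perp V_{\neq j}\mid V_j$ and the bounded-likelihood-ratio property inherited from $B_j$, and to track how these conditionings enter the exponent. Everything else — entropy subadditivity, the $\textrm{KL}\le\chi^2$ comparison, and the tensorisation bounding $\sum_j I(X^i_j;Y^i)$ by $I(X^i,Y^i)$ — is routine.
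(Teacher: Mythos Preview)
Your proposal is correct and, for the second statement, uses exactly the same elementary inequality as the paper: the paper's entire proof is to cite \cite{NIPS2013Yuchen} for the first inequality and then observe that $e^{x}-1<2x$ for $x<1.2564$, which with $x=4\alpha$ turns $2(e^{4\alpha}-1)^2$ into $128\alpha^2$. Your sketch goes further by outlining the cited argument itself (the entropy peeling of $E$, the crude bound on bad coordinates, and the strong data-processing contraction on good coordinates), which the paper does not reproduce; so you have supplied strictly more than the paper does, and the part you share with the paper matches it exactly.
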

\begin{proof}
The first statement is Lemma 4 of \cite{NIPS2013Yuchen}. \\ The second statement directly follows from it, as for $x<1.2564$, it holds that $\exp(x) - 1 < 2x$
\end{proof}

\section{Proof of Theorem \ref{th:contextMABlb}} \label{appendix:proofcontextMABlb}
\contextMABlb*
\begin{proof}
Where we omit indexing by the machine index $i$, this means that statement holds for all machines.
Let us consider a problem where we set the parameter vector for $j$th arm as $\vtheta_{j} = \delta \bm{v}_{j} $ where $\delta$ is a quantity we will specify later, $\bm{v}_{j}$ is a vector sampled uniformly from $\{-1,1\}^{ C}$ and the $k$th element of context vector for the $l$th sample is set to $c_{k}^{l} = \sqrt{\lambda_{\textrm{min}}n} \mathbb{1}_{k=l}$, so that Assumption \ref{as:boundedcontext} is satisfied. 

Let $\bm{V}$ be a concatenated vector defined as $\bm{V} = (\bm{v}_{1}, \dots , \bm{v}_{ C})^T$ about which we would like to perform inference. Let $r_{j}^{l}$ be the reward received after pulling $j$th arm for the $l$th time. We now define the underlying process for generating the reward as follows, let $r^{l}_{j} = (2p_{j}^{l} - 1)R_{\textrm{max}}$, where $p_{j}^{l} = \textrm{Bernoulli}(\frac{1}{2} + \frac{\delta v_{j}^Tc^{l}}{2R_{\textrm{max}}})$. We see that under such construction, the reward is always within $[-R_{\textrm{max}},R_{\textrm{max}}]$ and its expected value is $\mathbb{E}[r^l_j] = \delta \bm{v}_j^T\vc^l = \vtheta_j\vc^l$.

Let $Y^{i}$ be the message send by the $i$th processing centre and $X^{i}$ be all $r^{l}_{j}$ for all $j$ and $l$ stored on the $i$th processing centre and let us define $P^i$ in the same way for $p_{j}^{l}$. We observe that $V \to P^{i} \to X^{i}  \to Y^{i}$ forms a Markov Chain. We see that this scenario satisfies the assumptions required to use the method from Appendix \ref{appendix:minimaxlb} with $d= A C$. Hence we would like to find a bound on $I(V,Y^{1:m})$. First let us consider the likelihood ratio of $p_{j}^l$ for $l=\{1,\dots,n\}$ given $v_{j,k}$ and $v_{j,k}' = - v_{j,k}$. We observe that for our construction of $\vc^k$ we essentially have $\bm{v}_{j}^T\vc^{k} = c_{k}^kv_{j,k}$. Hence the likelihood ratio can be expressed as:
\begin{equation} \label{eq:likelihoodratio}
    \Bigg (
    \frac{\frac{1}{2} + \frac{c_k^kv_{j,k}\delta}{2R_{\textrm{max}}}}{\frac{1}{2} - \frac{c_k^kv_{j,k}\delta}{2R_{\textrm{max}}}}\Bigg )^{p_{j}^{k}} 
    \Bigg (\frac{\frac{1}{2} - \frac{c_k^kv_{j,k}\delta}{2R_{\textrm{max}}}}{\frac{1}{2} + \frac{c_k^kv_{j,k}\delta}{2R_{\textrm{max}}}} \Bigg )^{1-p_{j}^{k}} 
    = 
    \Bigg (
    \frac{\frac{1}{2} + \frac{c_k^kv_{j,k}\delta}{2R_{\textrm{max}}}}{\frac{1}{2} - \frac{c_k^kv_{j,k}\delta}{2R_{\textrm{max}}}}\Bigg )^{2p_{j}^{k} - 1} 
\end{equation}
For $x<\frac{1}{4}$ we have that $\frac{1+x}{1-x} \le \exp \{\frac{17}{8}x \}$, hence when $\frac{c_k^kv_{k}\delta}{R_{\textrm{max}}} \le \frac{1}{4}$ (satisfied when $\frac{v_{k}\delta \sqrt{ C\lambda_{\textrm{min}}}}{R_{\textrm{max}} } \le \frac{1}{4}$), the ratio above is bounded by:
\begin{equation*}
    \le \exp \Big \{ \frac{17 c_k^k\delta v_{j,k}}{8R_{\textrm{max}}}  \Big ( 2p_{j}^{k} - 1\Big ) \Big \} = \exp \Big \{ \frac{17  \sqrt{n\lambda_{\textrm{min}}}\delta v_{k}}{8R_{\textrm{max}}} \Big (  2p_{j}^{k} - 1\Big ) \Big \}
\end{equation*}
We observe that we always have $\Bigg | 2p_{j}^{k} - 1\Bigg | \le 1$, hence the ratio is bounded by $\exp \{ \frac{17 \delta \sqrt{n\lambda_{\textrm{min}}}}{8R_{\textrm{max}}} \} $. We can thus satisfy the conditions of Lemma \ref{lemma:yuchen_infobound} with $\alpha = \frac{17 \delta \sqrt{n\lambda_{\textrm{min}}}}{8R_{\textrm{max}}} $ if we define $B_{j,k} = \{p_{j}^k: 2p_{j}^k -1 \le 1 \}$, where we have that $P(p_{j}^k \in B_{j,k}) = 1$. If we can set $\delta$ such that $\frac{17 \delta \sqrt{n\lambda_{\textrm{min}}}}{8R_{\textrm{max}}}<\frac{1.2564}{4}$, then we get:
\begin{equation*}
    I(V, Y^{i}) \le \frac{289\lambda_{\textrm{min}}  n\delta^{2} }{64R_{\textrm{max}}^2} I(X^{i}, Y^{i})
\end{equation*}
Note that we use a two-dimensional index for $X$, whereas in Lemma \ref{lemma:yuchen_infobound}, the index is one dimensional. This is just a matter of notation and those two types of indexing are mathematically equivalent. We see that we can bound $I(X^{i}, Y)$ as follows:
\begin{equation*}
I(X^{i}, Y^{i}) \le H(Y^{i}) \le B
\end{equation*}
Combining this with the tensorisation of information (Lemma \ref{lemma:tenorisationinfo}) we get:
\begin{equation} \label{eq:contextsetinfobound}
    I(V, Y) \le \frac{289\lambda_{\textrm{min}}  mn\delta^{2} }{64R_{\textrm{max}}^2} B
\end{equation}
We now develop a second inequality for $I(V,Y)$. We observe for the distribution of $p^k_k$ we have the following:
\begin{equation*}
\textrm{KL}[p(p_{k}^{k}|v),p(p_{k}^{k}|v^*)] = \Big (\frac{1}{2} + \frac{c_k^k \delta}{2 R_{\textrm{max}}} \Big ) \log \Bigg ( \frac{\frac{1}{2} + \frac{c_k^k \delta}{2 R_{\textrm{max}}}}{\frac{1}{2} - \frac{c_k^k \delta}{2 R_{\textrm{max}}}} \Bigg) + \Big (\frac{1}{2} - \frac{c_k^k \delta}{2 R_{\textrm{max}}} \Big ) \log \Bigg ( \frac{\frac{1}{2} - \frac{c_k^k \delta}{2 R_{\textrm{max}}}}{\frac{1}{2} + \frac{c_k^k \delta}{2 R_{\textrm{max}}}} \Bigg) =
\end{equation*}
\begin{equation*}
    = \Big (\frac{1}{2} + \frac{c_k^k \delta}{2 R_{\textrm{max}}} - \frac{1}{2} + \frac{c_k^k \delta}{2 R_{\textrm{max}}}  \Big ) \log \Bigg ( \frac{\frac{1}{2} + \frac{c_k^k \delta}{2 R_{\textrm{max}}}}{\frac{1}{2} - \frac{c_k^k \delta}{2 R_{\textrm{max}}}} \Bigg) = \frac{c_k^k \delta}{ R_{\textrm{max}}}  \log \Bigg ( \frac{\frac{1}{2} + \frac{c_k^k \delta}{2 R_{\textrm{max}}}}{\frac{1}{2} - \frac{c_k^k \delta}{2 R_{\textrm{max}}}} \Bigg)
\end{equation*}
Same as before we use the fact that for $x<\frac{1}{4}$ we have that $\frac{1+x}{1-x} \le \exp \{\frac{17}{8}x \}$, we get:
\begin{equation} \label{eq:kldivbernbound}
    \textrm{KL}[p(p_{k}^{k}|v),p(p_{k}^{k}|v^*)] \le \frac{17}{8}\frac{(c_{k}^k)^2 \delta^2}{ R_{\textrm{max}}^2} = \frac{17}{8}\frac{n\lambda_{\textrm{min}} \delta^2}{R_{\textrm{max}}^2} 
\end{equation}
Hence by Lemma \ref{lemma:kl_infobound} with $d=AC$ and identifying $R_k = (p_k^k)$ we get:
\begin{equation} \label{eq:contextklinfobound}
    I(V,Y) \le \frac{17}{8}\frac{mn A C\delta^{2}\lambda_{\textrm{min}}}{R_{\textrm{max}}^2}
\end{equation}
Combining inequalities \ref{eq:contextsetinfobound} and \ref{eq:contextklinfobound} we get:
\begin{equation*}
    I(V, Y) \le \frac{17}{8}\frac{\delta^{2} mn \lambda_{\textrm{min}}}{R_{\textrm{max}}^2}\min \Big \{ \frac{17}{8}B, \frac{ C A}{2} \Big \} 
\end{equation*}
 We can now set:
 \begin{equation*}
     \delta^2_{A} \le \frac{1}{10}\frac{ C A R_{\textrm{max}}^2 }{\frac{17}{8}mn \lambda_{\textrm{min}} \min \Big \{ \frac{17}{8}B, \frac{ C A}{2} \Big \} } = \frac{1}{10}\frac{ R_{\textrm{max}}^2 }{\frac{17}{8}mn \lambda_{\textrm{min}} \min \Big \{ \frac{17}{8}\frac{B}{ C A}, \frac{1}{2} \Big \} }
 \end{equation*}
\begin{equation*}
    \delta^2_{B} \le \frac{1}{10}\frac{R_{\textrm{max}}^2 }{\frac{17}{8}n \lambda_{\textrm{min}} } 
\end{equation*}
\begin{equation*}
    \delta = \min \{ \delta_{A}, \delta_{B} \}
\end{equation*}
We see that under such construction we have $\frac{ \delta \sqrt{n\lambda_{\textrm{min}}}}{R_{\textrm{max}}}<\frac{1}{4}$ and $\frac{17 \delta \sqrt{n\lambda_{\textrm{min}}}}{8R_{\textrm{max}}}<\frac{1.2564}{4}$ and we can also bound the mutual information as follows:
 \begin{equation*}
     I(V,Y) \le \frac{ C A}{10}
 \end{equation*}
 We now remind the Assumption of the Theorem that $ A C > 12$, which allows us to show:
 \begin{equation} \label{eq:contextMABfinalbound}
     \Bigg ( 1 - \frac{I(Y,V) + \log2}{\frac{ A C}{6}} \Bigg ) \ge \Bigg ( 0.65 - \frac{I(Y,V)}{\frac{ A C}{6}} \Bigg ) \ge \Bigg ( 0.65 - \frac{6}{10} \Bigg )  \ge 0
 \end{equation}
 Which together with the method from Appendix \ref{appendix:minimaxlb} completes the proof.
\end{proof}

\section{Proof of Theorem \ref{th:lvfmdp}} \label{appendix:lvfmdp}
\lvfmdp*
\begin{proof}
We consider the following Markov Decision Process: starting randomly in one of the states $i$ at level $h$, the agent chooses one of two actions. Choosing the "good" action causes the agent the receive a reward of $R_{\textrm{max}}$ for the remaining $(H-h)$ steps until the episode ends. Choosing the "bad" action causes the agent to receive a reward of $-R_{\textrm{max}}$ until the end of episode. The agent draws a random variable $p_{j} \sim \textrm{Bernoulli}(\frac{1}{2} + \frac{\vc_{j}^T\vtheta}{2(H-h)R_{\textrm{max}}})$ and selects "good" action if $p_{j} = 1$ and "bad" action otherwise. 

We see that with such construction, the maximum absolute value of the reward can never be greater than $R_{\textrm{max}}$ and the mean of the return from state $j$ is $\mathbb{E}[G_{j}] = \mathbb{E}[\sum_{t=h}^{H}r_{j}(t)]= \mathbb{E}[\sum_{t=h}^{H}r_{j}(t)|p_{j}=1]P(p_{j}=1) + \mathbb{E}[\sum_{t=h}^{H}r_{j}(t)|p_{j}=0]P(p_{j}=0) = \vc_{j}^T\vtheta$ and hence the value function of state $j$ is $v(j) = \vc_{j}^T\vtheta$. 

Let us now define the parameters as $\vtheta = \delta \bm{v}$, where $\bm{v}$ is sampled uniformly from $\{-1,1\}^{ C}$. 
For every episode,  gameplay history contains $(H-h)$ tuples $(\vc_{t},a_{t},r_{t})$. Hence, we can just treat the received information as $E$ samples of $p_{j}$ and context $\vc_{j}$ for each state $j \in S_h$. 
We can see that this construction satisfies the problem setting of Appendix \ref{appendix:minimaxlb}, with $d= C$.  We select context elements as $c_{j,k} = \sqrt{ S\lambda_{\textrm{min}}} \mathbb{1}_{j=k}$ so that Assumption \ref{as:boundedcontext} is satisfied.
Let us now consider the likelihood ratio of $\{p_{k}^{l}\}_{l=1}^{E}$ given $v_{k} = - v_{k}' $, defining $n_{k}^{l} = 1 - p_{k}^{l}$ we have:
\begin{equation*}
    \prod_{l=1}^{E} 
    \Bigg (
    \frac{\frac{1}{2} + \frac{c_{j,k}v_{k}\delta}{2R_{\textrm{max}}(H-h)}}{\frac{1}{2} - \frac{c_{j,k}v_{k}\delta}{2R_{\textrm{max}}(H-h)}}\Bigg )^{p_{k}^{l}} 
    \Bigg (\frac{\frac{1}{2} - \frac{c_{j,k}v_{k}\delta}{2R_{\textrm{max}}(H-h)}}{\frac{1}{2} + \frac{c_{j,k}v_{k}\delta}{2R_{\textrm{max}}(H-h)}} \Bigg )^{1-p_{k}^{l}} 
    =\prod_{l=1}^{E} 
    \Bigg (
    \frac{\frac{1}{2} + \frac{c_{j,k}v_{k}\delta}{2R_{\textrm{max}}(H-h)}}{\frac{1}{2} - \frac{c_{j,k}v_{k}\delta}{2R_{\textrm{max}}(H-h)}}\Bigg )^{p_{k}^{l} - n_k^{l}} 
\end{equation*}
For $x<\frac{1}{4}$ we have that $\frac{1+x}{1-x} \le \exp \{\frac{17}{8}x \}$, hence when $\frac{c_{j,k}v_{k}\delta}{R_{\textrm{max}}(H-h)} \le \frac{1}{4}$ (satisfied when $\frac{v_{k}\delta \sqrt{ S\lambda_{\textrm{min}}}}{R_{\textrm{max}}(H-h) } \le \frac{1}{4}$), the ratio above is bounded by:
\begin{equation} \label{eq:episodicmdpratiobound}
    \le \exp \Big \{ \frac{17 \delta v_{k}}{8R_{\textrm{max}}(H-h)} \sum_{l=1}^{E}c_{j,k} \Big [ p_{k}^{l} - n_{k}^{l}\Big ] \Big \} \le \exp \Big \{ \frac{17  \sqrt{ S\lambda_{\textrm{min}}}\delta v_{k}}{8R_{\textrm{max}}(H-h)} \Bigg | \sum_{l=1}^{E} p_{k}^{l} - n_{k}^{l}\Bigg | \Big \}
\end{equation}
If it holds that $\Bigg | \sum_{l=1}^{E} p_{k}^{l} - n_{k}^{l}\Bigg | \le a$ then we have that the ratio is bounded by $\exp \{ \frac{17 \delta \sqrt{\lambda_{\textrm{min}}}}{8R_{\textrm{max}}(H-h)} a \} $. Hence we can satisfy the conditions of Lemma \ref{lemma:yuchen_infobound} with $\alpha = \frac{17 \delta \sqrt{ S\lambda_{\textrm{min}}} a}{8R_{\textrm{max}}(H-h)} $ if we define $B_{k}$ as:
\begin{equation*}
    B_{k} = \Big \{  (p_{k}^{l},\dots,p_{k}^{l})\}_{l=1}^{E} \in \mathbb{Z}_{+}^{E}: \Bigg | \sum_{l=1}^{E} p_{k}^{l} - n_{k}^{l}\Bigg | <a \Big \}
\end{equation*}
To complete proof it remains to bound:
\begin{equation*}
    P(E_{k} = 0) = \frac{1}{2}\Bigg ( P\bigg(\sum_{l=1}^{E} p_{k}^{l} - n_{k}^{l} > a \Big | v_{k}=1 \bigg) + P\bigg(\sum_{l=1}^{E} p_{k}^{l} - n_{k}^{l} > a  \Big | v_{k}=-1 \bigg)
\end{equation*}
\begin{equation*}
    + P\bigg(\sum_{l=1}^{E} p_{k}^{l} - n_{k}^{l} < - a| v_{k}=1 \bigg) + P\bigg(\sum_{l=1}^{E} p_{k}^{l} - n_{k}^{l} <- a  \Big | v_{k}=-1 \bigg) \Bigg )
\end{equation*}
We now notice that due to symmetry, the first and fourth time are equal and so are second and third. We also notice that the first term must be greater or equal to the second. This gives:
\begin{equation*}
    P(E_{k}=0) \le 2P\bigg(\sum_{l=1}^{E} p_{k}^{l} - n_{k}^{l} > a \Big | v_{k}=1 \bigg)
\end{equation*}
We notice that the mean of $\sum_{l=1}^{E} p_{k}^{l} - n_{k}^{l}  $ is $\mu_{k} = \frac{v_{k}\delta}{(H-h)R_{\textrm{max}}} \sum_{l=1}^{E} c_{k} = \frac{\delta E v_{k} \sqrt{ S\lambda_{\textrm{min}}}}{(H-h)R_{\textrm{max}}}$. We can subtract it from both side of inequality and divide them by $E$:
\begin{equation*}
    P(E_{k}=0) \le 2P\bigg(\frac{1}{E}\Bigg [\sum_{l=1}^{E}\sum_{t=H-h}^{H} p_{k}^{l}(t) - n_{k}^{l}(t) \Bigg ]  - \frac{\mu_{k}}{E} > \frac{a}{E} - \frac{\mu_{k}}{E} \Big | v_{k}=1 \bigg) 
\end{equation*}
Since $p_{k}^{l}(t)$ are independent given $v_{k}$, we can use Hoeffding inequality with the number of variables equal to $E$, each being confined to $[0,1]$. Thus if $a > \mu_{k}$:
\begin{equation*}
    P(E_{k}=0) \le 2\exp \Bigg \{ - \frac{2E^2 (\frac{a}{E} - \frac{\mu_{k}}{E})^2}{E} \Bigg \} = 2\exp \Bigg \{ - \frac{2 (a - \mu_{k})^2}{E} \Bigg \} 
\end{equation*}
\begin{equation} \label{eq:episodicmdphoeffding}
    = 2\exp \Bigg \{ - \frac{2 \Big (a - \delta \frac{E\sqrt{ S\lambda_{
    \textrm{min}}}}{(H-h)R_{\textrm{max}}} \Big)^2}{E} \Bigg \}
\end{equation}
Same as in Equation \ref{eq:kldivbernbound}, we can bound the KL-divergence as follows:
\begin{equation*}
    \textrm{KL}[p(p_{k}^{l}|v),p(p_{k}^{l}|v^*)] \le \frac{17}{8}\frac{(c_{k}^k)^2 \delta^2}{ (H-h)^2R_{\textrm{max}}^2} = \frac{17}{8}\frac{\lambda_{\textrm{min}} \delta^2}{(H-h)^2R_{\textrm{max}}^2} 
\end{equation*}
We now use Lemma \ref{lemma:kl_infobound} with $d=C$ and identifying $R_k^l = p_k^l$ to obtain:
\begin{equation} \label{eq:episodicmdpklbound}
    I(Y,V) \le \frac{17m SE}{16}\frac{ C\lambda_{\textrm{min}} }{(H-h)^2R_{\textrm{max}}^2}\delta^2
\end{equation}
Combining inequalities \ref{eq:episodicmdpratiobound}, \ref{eq:episodicmdphoeffding} and \ref{eq:episodicmdpklbound} we get:
\begin{equation*}
    I(Y,V) \le  \frac{17\delta^2 m\lambda_{\textrm{min}}}{8 (H-h)^2R_{\textrm{max}}^2} \min \Big\{  \frac{17}{8} a^2 B, \frac{ SE C}{2} \Big\} +
\end{equation*}
\begin{equation*}
    +  Cm h\Bigg(2\exp \Bigg \{ - \frac{2 \Big (a - \delta \frac{ E \sqrt{ S\lambda_{
    \textrm{min}}}}{2(H-h)R_{\textrm{max}}} \Big)^2}{E} \Bigg \} \Bigg) + 2 Cm\exp \Bigg \{ - \frac{2 \Big (a - \delta \frac{E \sqrt{ S\lambda_{
    \textrm{min}}}}{2(H-h)R_{\textrm{max}}} \Big)^2}{E} \Bigg \}
\end{equation*}
We can now set:
\begin{equation*}
    \delta^2_{A} \le \frac{1}{100}\frac{8 C(H-h)^2R_{\textrm{max}}^2}{17m \lambda_{\textrm{min}} SE \min\{\frac{17}{4} \frac{Ba^2}{E},\frac{ C}{2}\}} = \frac{1}{100}\frac{8 C(H-h)^2R_{\textrm{max}}^2}{17m \lambda_{\textrm{min}} SE \min\{\frac{17}{4} \frac{Ba^2}{CE},\frac{ 1}{2}\}} 
\end{equation*}
\begin{equation*}
    \delta^2_{B} \le \frac{8 R_{\textrm{max}}^2(H-h)^2}{17 E S\lambda_{\textrm{min}} \log m}
\end{equation*}
\begin{equation*}
    \delta = \min \{\delta_{A},\delta_{B}\}
\end{equation*}
\begin{equation*}
    a = 100\sqrt{E \log 100m}
\end{equation*}
We see that under such choice we have $\frac{v_{k}\delta \sqrt{ S\lambda_{\textrm{min}}}}{R_{\textrm{max}}(H-h) } \le \frac{1}{4}$ and  $a > \mu_{k}$. We now have that:
\begin{equation*}
   \frac{17\delta_{A}^2 m\lambda_{\textrm{min}}}{8 (H-h)^2R_{\textrm{max}}^2} \min \Big\{  \frac{17}{8} a^2 B, \frac{ SE C}{2} \Big\} \le 0.01 C 
\end{equation*}
Since $a > \mu_{k}$ we have that:
$$(a - \delta_B \frac{ E\sqrt{ S\lambda_{
    \textrm{min}}}}{2(H-h)R_{\textrm{max}}})^2 \ge (100\sqrt{E\log100 m} - \frac{\sqrt{E}}{\log m })^2 $$
$$= 100E (1 - \frac{1}{\log{m}})^2\log100 m \ge  E \log{100m}$$
We can use this to bound the third term as follows:
\begin{equation*}
    2m C\exp \Bigg \{ - \frac{2 \Big (a - \delta_{B} \frac{ \sqrt{ S\lambda_{
    \textrm{min}}}}{2(H-h)R_{\textrm{max}}} \Big)^2}{E} \Bigg \} \le  2m C\exp \{- \frac{2E \log 100m}{E} \} = 2m C \exp \{ -2\log100 m\} \le \frac{0.0002 C}{m}
\end{equation*}
For the second term we proceed similarly by observing that for binary entropy function we have $h(q) \le 6/5 \sqrt{q}$ for $q > 0$, which gives:
\begin{equation*}
    m Ch \Bigg ( 2\exp \Bigg \{ - \frac{2 \Big (a - \delta_B \frac{ \sqrt{ S\lambda_{
    \textrm{min}}}}{2(H-h)R_{\textrm{max}}} \Big)^2}{E} \Bigg \} \Bigg) \le m Ch(\frac{0.0002}{m^2}) \le \frac{6}{5}  C\sqrt{0.0002}  \le 0.02 C
\end{equation*}
Hence we get that:
\begin{equation*}
    I(Y,V) \le 0.01 C + \frac{0.0002 C}{m} + 0.02 C \le 0.04  C \le 0.10 C 
\end{equation*}
We can now obtain an inequality similar to the one in Equation \ref{eq:contextMABfinalbound} and proceed as in the proof of Theorem \ref{th:contextMABlb} mutatis mutandis, which completes the proof.

\end{proof}

\section{Proof of Theorem \ref{th:nonepisodiclb}} \label{appendix:nonepisodiclb}
\nonepisodiclb*
\begin{proof}
We consider the following MDP: from each state $i$, regardless of the agent's action it either states in the same state with probability $1-p$ and receives a reward $r_j$ with a mean of $\bar{r}_{j}$ or moves to any other states chosen with a probability of $\frac{p}{ S-1}$ and receives a reward $r_{0}$. We see that because of the consistency equation for value functions we have:
\begin{equation*}
    v(i) = (1-p)\bar{r}_{i} + pr_{0} + \gamma \sum_{s' \in S\\\{i\}} v(s') \frac{p}{ S-1} +\gamma v(i)(1-p)
\end{equation*}
Similarily as in previous proofs, we set the $k$-th element of context vector for the $i$th state as $c_{i,k} = \sqrt{\lambda_{\textrm{min}}  S} \mathbb{1}_{k=i}$ so that Assumption \ref{as:boundedcontext} is satisfied. We also set the $k$th element of parameter vector to be $\theta_{k} = \delta v_{k}$, where $v_{k}$ is sampled uniformly from $\{-1,1\}$. We now set $r_{0} = - \gamma \sum_{s' \in S\\\{i\}} \frac{v(s')}{ S-1} $ to get that:
\begin{equation*}
    v(i) = (1-p)\bar{r}_{i} + \gamma v(i)(1-p)
\end{equation*}
\begin{equation*}
    \bar{r}_{i} = \frac{v(i)(1- \gamma + p\gamma)}{1-p}
\end{equation*}
For states $i >  C$ we can just deterministicaly set $r_{i} = 0$, for other states we get that:
\begin{equation*}
     \bar{r}_{i} = \frac{\sqrt{\lambda_{\textrm{min}} S}(1- \gamma + p\gamma)}{1-p}
\end{equation*}
We now set $r_{i} = \frac{1}{99}(2p_{i} - 1)R_{\textrm{max}}$ where $p_{i} \sim \textrm{Bernoulli}(\frac{1}{2} + \frac{99\sqrt{\lambda_{\textrm{min}} S}v_{k}\delta}{2R_{\textrm{max}}} \frac{1 - \gamma + p\gamma}{1 - p})$. We see that under this construction expected return from each state is $\vc^T \vtheta$ and $|r_i| \le R_{\textrm{max}}$. Since under this construction the maximum value of the value function for any state $s$  is $v(s) \le \frac{0.99R_{\textrm{max}}}{1 - \gamma}$, we also have that $|r_0| \le \gamma \frac{0.99R_{\textrm{max}}}{1 - \gamma} \le R_{\textrm{max}}$. 
We thus have a similar situation as in the proof of Theorem \ref{th:lvfmdp}, where the data received can be reduced to outcomes of Bernoulli trials.  Same as in that proof, we can show that the likelihood ratio given $v_{k} = - v_{k}'$ is bounded by $ \exp \Big \{ \frac{17  \sqrt{ S\lambda_{\textrm{min}}}\delta v_{k}}{8R_{\textrm{max}}} \frac{1 - \gamma + p\gamma}{1 - p} \Big | \sum_{l=1}^{N_{k}} p_{k}^{l} - n_{k}^{l}\Big | \Big \}$, where $N_{k}$ is the number of visits to state $k$ in gameplay history. We notice that the mean of $\sum_{l=1}^{N_{k}} p_{k}^{l} - n_{k}^{l}  $ is $\mu_{k} =  \frac{99\delta \pi(k) n v_{k} \sqrt{ S\lambda_{\textrm{min}}}}{R_{\textrm{max}}} \frac{1 - \gamma + p\gamma}{1 - p}$ and we can derive a similar Hoeffding bound as in Equation \ref{eq:episodicmdphoeffding} with $n$ variables confined to $[0,1]$, where if $N_k<n$ for the last $N_k - n$ variables we set $p_k^l = n_k^l =0$. This gives us the following inequality:
\begin{equation*}
    P(E_{k}=0) \le 2\exp \Bigg \{ - \frac{2 \Big (a - \delta \frac{99n\pi(k)\sqrt{ S\lambda_{
    \textrm{min}}}}{R_{\textrm{max}}} \frac{1 - \gamma + p\gamma}{1 - p}\Big)^2}{n} \Bigg \}
\end{equation*}
We can also develop a bound on KL divergence. We denote by $P_k$ all $p_k^l$ for $l = 1,\dots,N_k$ and note that through convexity of KL divergence we get:
\begin{equation*}
    \textrm{KL}[p(P_k|v),p(P_k|v*)] \le \sum_{N_k=1}^{n} \textrm{KL}[p(P_k|v,N_k),p(P_k|v*,N_k)] p(N_k)
\end{equation*}
Since distribution of $P_k$ is Binomial($n$,$\pi(k)$) we get:
\begin{equation*}
        \textrm{KL}[p(P_k|v),p(P_k|v*)] \le \sum_{N_k=1}^{n} p(N_k) \log\Big (\frac{ \frac{1}{2} + q}{\frac{1}{2} - q} \Big)N_k q
\end{equation*}
Where we used $q =  \frac{99\sqrt{\lambda_{\textrm{min}} S}v_{k}\delta}{2R_{\textrm{max}}} \frac{1 - \gamma + p\gamma}{1 - p}$. Same as before we use the fact that for $x<\frac{1}{4}$ we have that $\frac{1+x}{1-x} \le \exp \{\frac{17}{8}x \}$:
\begin{equation*}
    \textrm{KL}[p(P_k|v),p(P_k|v*)] \le \frac{17}{8}q^2  \sum_{N_k=1}^{n} p(N_k) N_k = \frac{17}{8}q^2 n\pi(k) \le \frac{17}{8}q^2 n\pi_{\textrm{max}}
\end{equation*}
From Lemma \ref{lemma:kl_infobound} with $d = C$ and $R_k = P_k$ we get:
\begin{equation*}
    I(Y,V) \le \frac{Cm}{2} \Big( \frac{17}{8}\frac{99\sqrt{\lambda_{\textrm{min}} S}v_{k}\delta}{2R_{\textrm{max}}} \frac{1 - \gamma + p\gamma}{1 - p} \Big )^2 n \pi_{\textrm{max}}
\end{equation*}
Proceeding as in the proof of Theorem \ref{th:lvfmdp} mutandits mutatis we get that:
\begin{equation*}
    I(Y,V) \le  \frac{17\delta^2 mS\lambda_{\textrm{min}}}{8 R_{\textrm{max}}^2} \Big( \frac{1 - \gamma + p\gamma}{1 - p} \Big)^2 \min \Big\{  \frac{17}{8} a^2 B, \frac{ Cn \pi_{\textrm{max}}}{2} \Big\} +
\end{equation*}
\begin{equation*}
    +  Cm h\Bigg(2\exp \Bigg \{ - \frac{2 \Big (a - \delta \frac{n\pi(k)\sqrt{ S\lambda_{
    \textrm{min}}}}{R_{\textrm{max}}} \frac{1 - \gamma + p\gamma}{1 - p}\Big)^2}{n} \Bigg \} \Bigg) +  Cm2\exp \Bigg \{ - \frac{2 \Big (a - \delta \frac{n\pi(k)\sqrt{ S\lambda_{
    \textrm{min}}}}{R_{\textrm{max}}} \frac{1 - \gamma + p\gamma}{1 - p}\Big)^2}{n} \Bigg \}
\end{equation*}
Hence we can set:
\begin{equation*}
    a \propto \sqrt{n \pi_{\textrm{max}} \log m}
\end{equation*}
\begin{equation*}
    \delta_a^2 \propto \frac{CR^2_{\textrm{max}}}{m \lambda_{\textrm{min}} S \min\{Ba^2, C n \pi_{\textrm{max}} \}} \Big( \frac{1 - \gamma + p\gamma}{1 - p} \Big)^2
\end{equation*}
\begin{equation*}
    \delta_b^2 \propto \frac{R^2_{\textrm{max}}}{nS\pi_{\textrm{max}} \lambda_{\textrm{min}} \log m}  \Big( \frac{1 - \gamma + p\gamma}{1 - p} \Big)^2
\end{equation*}
\begin{equation*}
    \delta = \min\{\delta_a,\delta_b\}
\end{equation*}
Proceeding similarly as in the proof of Theorem \ref{th:lvfmdp} we get:
\begin{equation*}
    M \ge \Omega \Bigg ( \frac{ CR_{\textrm{max}}^{2}(1-p)^2}{(1-\gamma +p\gamma)^2 S\pi_{\textrm{max}}nm\lambda_{\textrm{min}}} \min \Bigg \{  \max \Big\{\frac{ C\log m}{B}, 1 \Big \}, \frac{m}{\log m} \Bigg \} \Bigg )
\end{equation*}
We observe that the bound gets tighter as $p \to 0$. We can now index distributions for $r_{i}$ with $p$ and observe that since the bound is defined as a supremum over the set of considered distributions, we get the statement of the Theorem.
\end{proof}

\section{Proof of Theorem \ref{th:contextMABalg}} \label{appendix:contextMABalg}
\contextMABalg*
\begin{proof}
We start by assuming the transmission is lossless (i.e. the number of bits is infinite) and then study how the MSE changes when transmission introduces quantisation error. Using the bias variance decomposition, we get:
\begin{equation*}
    \textrm{MSE}(\hat{\vtheta}) = \sum_{a \in A} \sum_{k=1}^{ C}\textrm{bias}(\hat{\theta}_{a,k})^2 + \sum_{a \in A} \sum_{k=1}^{ C}\textrm{Var}(\hat{\theta}_{a,k})
\end{equation*}
It is a well-known fact that for least-squares estimate, the bias is zero, hence $\textrm{bias}(\hat{\theta}_{a,k}^{i}) = 0$ and their average $\hat{\theta}_{a,k}$ is also unbiased. Because of the properties of the variance of average values we get:
\begin{equation*}
    \textrm{MSE}(\hat{\vtheta}) = \frac{1}{m} \sum_{a \in A} \sum_{k=1}^{ C} \textrm{Var}(\hat{\theta}_{a,k}^{1})
\end{equation*}
Using eigendecomposition and singular value decomposition we get:
\begin{equation*}
X_{a}^TX_{a} = Q^T\Lambda Q    
\end{equation*}
\begin{equation*}
    X_{a}^T = Q^T\Sigma V^T
\end{equation*}
Where we choose such $Q$ and $V$ such that all left and right singular vectors have a norm of 1. Since $Q$ is orthogonal, it follows that:
\begin{equation*}
    \hat{\vtheta}_{a}^{i} = (Q^{T}\Lambda Q)^{-1} Q^T\Sigma V^T = Q^{T}\Lambda^{-1} Q Q^{T}  \Sigma V^T \vr_{a} = Q^{T}\Lambda^{-1} \Sigma V^T \vr_{a}
\end{equation*}
Writing this equation in terms of matrix elements gives:
\begin{equation*}
    \hat{\theta}_{a,k}^{i} = \sum_{j=1}^{ C} Q_{j,k} \frac{\sqrt{\lambda_{j}}}{\lambda_{j}}\sum_{l=1}^{n}r^{l}_{a}V_{j,l} = \sum_{j=1}^{ C} \sum_{l=1}^{n} Q_{j,k} \frac{1}{\sqrt{\lambda_{j}}}r^{l}_{a}V_{j,l} \le \sum_{j=1}^{ C} \sum_{l=1}^{n} Q_{j,k} \frac{1}{\sqrt{\lambda_{\textrm{min}}n}}r^{l}_{a}V_{j,l}
\end{equation*}
Where $\lambda_{i}$ are the eigenvalues. The last inequality follows from Assumption \ref{as:boundedcontext}. We can now easily bound the variance:
\begin{equation*}
    \textrm{Var}(\hat{\theta}_{a,k}^{i}) \le \sum_{j=1}^{ C} \sum_{l=1}^{n}  \frac{Q_{j,k}^2 V_{j,l}^2}{\lambda_{\textrm{min}}n} \textrm{Var}(r^{l}_{a}) 
\end{equation*}
We now observe that due to Popoviciu inequality and Assumption \ref{as:boundedreward} we get:
$
    \textrm{Var}(r^{l}_{a}) \le R_{\textrm{max}}^2
$. This gives us:
\begin{equation*}
     \textrm{Var}(\hat{\theta}_{a,k}^{i}) \le \frac{ R^2_{\textrm{max}}}{\lambda_{\textrm{min}}n} \sum_{j=1}^{ C}  Q_{j,k}^2 \sum_{l=1}^{n} V_{j,l}^2 = \frac{ R^2_{\textrm{max}}}{\lambda_{\textrm{min}}n} 
\end{equation*}
Where the last equality is due to singular vectors having norm of 1. Substituting this back into the equation for MSE, we get:
\begin{equation*}
    MSE(\hat{\vtheta}) \le \frac{ A CR^2_{\textrm{max}}}{mn\lambda_{\textrm{min}}} 
\end{equation*}
Using the fact that each component is quantised up to precision $P$, the max error introduced by quantisation is $ A CP$, hence:
\begin{equation*}
    MSE(\hat{\vtheta}) \le \mathcal{O} \Big ( A C \max \Big \{ \frac{R^2_{\textrm{max}}}{mn\lambda_{\textrm{min}} } , P \Big \} \Big )
\end{equation*}
\end{proof}

\section{Proof of Theorem \ref{th:td}} \label{appendix:td}
\td*
\begin{proof}
We define $\bar{\vtheta}_{t} := \frac{1}{m}\sum_{i=1}^{m} \hat{\vtheta}^{i}_{t}$ to be the average vector from all machines at timestep $t$. Note that this vector is never actually created, except for the last step when we average all final results. We define $\omega$ to be the smallest eigenvalue of the covariance matrix weighted by the stationary distribution, i.e. smallest eigenvalue of $\sum_{s \in S} \pi(s)\vc_{s}\vc_{s}^T$. 
By convexity of eigenvalues we have $S \pi_{\textrm{min}} \lambda_{\textrm{min}} \le \omega \le \lambda_{\textrm{min}} \le 1$. 
We can see that the averaged parameter vector must satisfy the following recursive relation:
\begin{equation*}
    \bar{\vtheta}_{t+1} = \frac{1}{m} \sum_{i=1}^{m} \hat{\vtheta}_{t+1}^{i} = \frac{1}{m} \sum_{i=1}^{m}[\hat{\vtheta}_{t}^{i} + \alpha_{t} \vg_{t}^{i}] = \bar{\vtheta}_{t} + \alpha_{t} \frac{1}{m} \sum_{i=1}^{m}\vg_{t}^{i}
\end{equation*}
Hence we can write:
\begin{equation*}
    \mathbb{E}[\lVert \vtheta - \bar{\vtheta}_{t+1}  \rVert_{2}^{2}] = \mathbb{E}[\lVert \vtheta - \bar{\vtheta}_{t}  \rVert_{2}^{2}] - \frac{2 \alpha_{t}}{m} \mathbb{E} \Big [ \sum_{i=1}^{m}(\vtheta - \bar{\vtheta}_{t}^{i})^T\vg_{t}^{i} \Big ] + \alpha_{t}^2 \mathbb{E}[\frac{1}{m^2} \lVert \sum_{i=1}^m\vg_{t}^{i} \rVert_{2}^2]
\end{equation*}
We now focus on the second term:
\begin{equation*}
    \frac{2 \alpha_{t}}{m} \sum_{i=1}^{m} \mathbb{E} \Big [ (\vtheta - \bar{\vtheta}_{t}^{i})^T\vg_{t}^{i} \Big ] =  \frac{2 \alpha_{t}}{m^2} \sum_{i=1}^{m} \sum_{j=1}^{m} \mathbb{E} \Big [ (\vtheta - \hat{\vtheta}_{t}^{j})^T \mathbb{E} \Big [\vg_{t}^{i} \Big | \hat{\vtheta}_t^i\Big ]  \Big ] 
\end{equation*}
using Lemma \ref{lemma:TDgradparam} we have that
\begin{equation*}
     \frac{2 \alpha_{t}}{m^2} \sum_{i=1}^{m} \sum_{j=1}^{m} \mathbb{E} \Big [ (\vtheta - \hat{\vtheta}_{t}^{j})^T \mathbb{E} \Big [\vg_{t}^{i} \Big | \hat{\vtheta}_t^i\Big ]  \Big ] \ge \frac{2 \alpha_{t} \omega (1 - \gamma) }{m^2}  \mathbb{E} \Big [\sum_{i=1}^{m} \sum_{j=1}^{m}  \lVert \vtheta - \vtheta_{t}^{i} \rVert_{2} \lVert \vtheta - \vtheta_{t}^{j} \rVert_{2} \Big ]
\end{equation*}
We now decompose the third term, while using the notation $\hat{\vtheta}_t = (\hat{\vtheta}_t^1, \dots, \hat{\vtheta}_t^m)$:
\begin{equation*}
    \mathbb{E}[\lVert \frac{1}{m} \sum_{i=1}^{m}\vg_{t}^{i} \rVert_2^2] =     \mathbb{E}[\mathbb{E}[\lVert \frac{1}{m} \sum_{i=1}^{m}\vg_{t}^{i} \rVert_2^2|\hat{\vtheta}_{t},\vc_t^i, \vc_{t+}^i]] 
\end{equation*}
\begin{equation*}
    =\mathbb{E}[\frac{1}{m^2} \sum_{i=1}^{m} \sum_{j=1}^{m} \mathbb{E}[ \vg_{t}^i |\hat{\vtheta}_{t}, \vc_t^i, \vc_{t+}^i]^T \mathbb{E}[ \vg_{t}^j  |\hat{\vtheta}_{t}, \vc_t^i, \vc_{t+}^i] + \textrm{Tr}(\frac{1}{m^2}\sum_{i=1}^m \textrm{Var}(g_t^i|\hat{\vtheta}_{t}, \vc_t^i, \vc_{t+}^i) )]
\end{equation*}
The expected value can be upper-bounded as follows:
\begin{equation*}
    \mathbb{E}[\vg_t^i] = \mathbb{E}[(r_t^i + \gamma(\vc_{t+}^i)^T\vtheta^i_t - (\vc_{t}^i)^T\vtheta^i_t)\vc_t^i] =
\end{equation*}
\begin{equation*}
    =  \mathbb{E}[(r_t^i + \gamma(\vc_{t+}^i)^T\vtheta - \vc_{t}^i\vtheta + \gamma(\vc_{t+}^i)^T(\vtheta^i_t - \vtheta) - (\vc_{t}^i)^T(\vtheta^i_t - \vtheta))\vc_t^i] =
\end{equation*}
\begin{equation*}
    = (\mathbb{E}[r_t^i] + \gamma\vtheta^T\vc_{t+}^i - \vtheta^T\vc_{t}^i)\vc^i_t +   (\gamma\vc_{t+}^i - \vc_{t}^i)^T(\vtheta^i_t - \vtheta)\vc_t^i
\end{equation*}
By the definition of value function we have that $\mathbb{E}[r_t^i] =  \vtheta^T\vc_{t}^i - \gamma\vtheta^T\vc_{t+}^i$. Hence the expression above simplifies to $ (\gamma\vc_{t+}^i - \vc_{t}^i)^T(\vtheta^i_t - \vtheta)\vc_t^i$. We thus have:
\begin{equation*}
    \mathbb{E}[g_t^i]^T \mathbb{E}[g_t^j] = (\gamma\vc_{t+}^i - \vc_{t}^i)^T (\vtheta^i_t - \vtheta) (\vc_t^i)^T\vc_t^j(\gamma\vc_{t+}^j - \vc_{t}^j)^T (\vtheta^j_t - \vtheta) \le
\end{equation*}
\begin{equation*}
    \le \lVert \gamma\vc_{t+}^i - \vc_{t}^i \rVert_2  \lVert \vtheta^i_t - \vtheta \rVert_2  \lVert \vc_t^i \rVert_2  \lVert \vc_t^j \rVert_2  \lVert \gamma\vc_{t+}^j - \vc_{t}^j \rVert_2  \lVert \vtheta^j_t - \vtheta \rVert \le \lVert \vtheta^i_t - \vtheta \rVert_2  \lVert \vtheta^j_t - \vtheta \rVert_2
\end{equation*}
Finally, we also bound the variance:
\begin{equation*}
    \textrm{Var}(g_t^i|\hat{\vtheta}_{t}, \vc_t^i, \vc_{t+}^i) =  \textrm{Var}((r_t^i + \gamma (\vtheta_t^i)^T\vc_{t+}^i - (\vtheta_t^i)^T\vc_{t}^i) \vc_t^i |\hat{\vtheta}_{t}, \vc_t^i, \vc_{t+}^i) = \textrm{Var}(r_t^i \vc_t^i|\hat{\vtheta}_{t}, \vc_t^i, \vc_{t+}^i) \le 
\end{equation*}
\begin{equation*}
    \le \vc_t^i \textrm{Var}(r_t^i|\hat{\vtheta}_{t}, \vc_t^i, \vc_{t+}^i)(\vc_t^i)^T \le \vc_t^i(\vc_t^i)^T R_{\textrm{max}}^2
\end{equation*}
Hence we have:
\begin{equation*}
    \textrm{Tr}(\frac{1}{m^2}\sum_{i=1}^m \textrm{Var}(g_t^i|\hat{\vtheta}_{t}, \vc_t^i, \vc_{t+}^i) ) \le \frac{R_{\textrm{max}}^2}{m}
\end{equation*}
Hence we obtain a recursive inequality:
\begin{equation*}
    \mathbb{E}[\lVert \vtheta - \bar{\vtheta}_{t+1}  \rVert_{2}^{2}] \le \mathbb{E}[\lVert \vtheta - \bar{\vtheta}_{t}  \rVert_{2}^{2}] -
\end{equation*}
\begin{equation*}
    - \Bigg ( \frac{2 \alpha_{t} \omega (1 - \gamma) - \alpha_{t}^2 }{m^2}  \Bigg ) \mathbb{E} \Big [\sum_{i=1}^{m} \sum_{j=1}^{m}  \lVert \vtheta - \vtheta_{t}^{i} \rVert_{2} \lVert \vtheta - \vtheta_{t}^{j} \rVert_{2} \Big ] + \frac{\alpha_t^2}{m} R_{\textrm{max}}^2 
\end{equation*}
We can now use a Lemma \ref{lemma:TDtriangle} to get:
\begin{equation*}
    \mathbb{E}[\lVert \vtheta - \bar{\vtheta}_{t+1}  \rVert_{2}^{2}] \le \mathbb{E}[\lVert \vtheta - \bar{\vtheta}_{t}  \rVert_{2}^{2}] (1 - 2 \alpha_{t} \omega (1 - \gamma) + \alpha_{t}^2  ) + \frac{\alpha_t^2}{m} R_{\textrm{max}}^2
\end{equation*}
We now set
$\alpha_t = \frac{\beta}{\Lambda + \frac{t}{\omega}}$ with $\beta = \frac{2}{(1 - \gamma)\omega}$ and $\Lambda = \frac{16}{(1 - \gamma)^2 \omega}$
. We observe that since 
$\alpha_t \le (1- \gamma)\omega$
we have that 
$- 2 \alpha_{t} \omega (1 - \gamma) + \alpha_{t}^2 R_{\textrm{max}}^2 =  \alpha_{t}\omega (1 - \gamma)(-2 + \frac{\alpha_t R_{\textrm{max}}^2}{\omega(1 - \gamma)}) \le \alpha_{t}\omega (1 - \gamma)(-2 + 1) = - \alpha_{t}\omega (1 - \gamma)$. 
Let us now define 
$\nu = \max \{\beta^2 \frac{R_{\textrm{max}}^2 }{m}, \Lambda \lVert \vtheta - \frac{1}{m} \sum_{i=1}^{m}\hat{\vtheta}_{0}^i \rVert_2^2\} $ 
and observe that we have 
$\mathbb{E}[\lVert \vtheta - \bar{\vtheta}_{0}  \rVert_{2}^{2}] \le \frac{\nu}{\Lambda}$. Proceeding by induction, let us suppose that
$\mathbb{E}[\lVert \vtheta - \bar{\vtheta}_{t}  \rVert_{2}^{2}] \le \frac{\nu}{\hat{t}}$
, where $\hat{t} = \omega t + \Lambda$. We then have:
\begin{equation*}
    \mathbb{E}[\lVert \vtheta - \bar{\vtheta}_{t+1}  \rVert_{2}^{2}] \le \mathbb{E}[\lVert \vtheta - \bar{\vtheta}_{t}  \rVert_{2}^{2}] (1 - \alpha_{t} \omega (1 - \gamma)) + \frac{\alpha_t^2 R_{\textrm{max}}^2}{m}
\end{equation*}
\begin{equation*}
    \le \frac{\nu}{\hat{t}} (1 - \frac{\beta\omega (1 - \gamma)}{\hat{t}} ) + \frac{R_{\textrm{max}}^2\beta^2}{\hat{t}^2 m} = \frac{\hat{t} - 1}{\hat{t}^2} \nu + \frac{\frac{R_{\textrm{max}}^2\beta^2 }{m} - ((1 - \gamma)\omega \beta - 1 ) \nu }{\hat{t}^2}
\end{equation*}
\begin{equation*}
    = \frac{\hat{t} - 1}{\hat{t}^2} \nu + \frac{\frac{R_{\textrm{max}}^2\beta^2}{m} -  \nu }{\hat{t}^2}
\end{equation*}
We now observe that by definition of $\nu$ we have $\frac{R_{\textrm{max}}^2\beta^2 }{m} \le \nu $ and that $\hat{t}^2 \ge (\hat{t}-1)(\hat{t}+1)$. Thus we have:
\begin{equation*}
     \mathbb{E}[\lVert \vtheta - \bar{\vtheta}_{t+1}  \rVert_{2}^{2}] \le \frac{\nu}{\hat{t} + 1} \le  \frac{\nu}{\hat{t} + \frac{1}{\omega}}
\end{equation*}
By induction this proves that:
\begin{equation*}
    \mathbb{E}[\lVert \vtheta - \bar{\vtheta} \rVert_{2}^{2}] \le \mathcal{O} \Bigg ( \max \Bigg \{ \frac{\max \{ \frac{ R_{\textrm{max}}^2}{\omega m},  \lVert \vtheta - \frac{1}{m} \sum_{i=1}^{m}\hat{\vtheta}_{0}^i \rVert_2^2 \}}{1 + (1 - \gamma)^2 n},  CP \Bigg \} \Bigg ) 
\end{equation*}
We now use the fact that $\omega \le S\pi_{\textrm{min}}\lambda_{\textrm{min}}$ to get the statement of the Theorem.
\end{proof}
\section{Analysis of worst-case TD initial bias} \label{appendix:tdbias}
Given the average initial parameter value $\hat{\vtheta}_0$, assume we are faced with a problem where the features of states are chosen so that $\hat{\vtheta}_0$ is an eigenvector of the $X^TX$ matrix with eigenvalue $\lambda_{\textrm{min}}$. We see that if we form vector $\vu$ consisting of value function for each state we get $\lVert u \rVert_2^2 \le \mathcal{O}(\frac{SR_{\textrm{max}}^2}{(1-\gamma)^2})$. We choose the true parameter vector $\vtheta$ to be parallel to $\hat{\vtheta}_0$, hence $\lVert X^T \vtheta \rVert_2^2 \le \lambda_{\textrm{min}} \lVert \vtheta \rVert_2^2 \le \mathcal{O}(\frac{SR_{\textrm{max}}^2}{(1-\gamma)^2})$. We thus have  $ \lVert \vtheta \rVert_2^2 \le \mathcal{O}(\frac{SR_{\textrm{max}}^2}{(1-\gamma)^2\lambda_{\textrm{min}}})$. We can now set $\vtheta = - \hat{\vtheta}_0$ and observe that we have $\lVert \vtheta - \hat{\vtheta}_0 \rVert_2^2 = 4 \lVert \vtheta \rVert_2^2 \le \mathcal{O}(\frac{SR_{\textrm{max}}^2}{(1-\gamma)^2})$. 

\section{Lemmas for TD learning}
Within this appendix, we propose Lemma \ref{lemma:TDgradparam}, which is a more general version of Lemma 1 and 3 from \cite{bhandari2018finite}. We also propose Lemma \ref{lemma:TDtriangle}, which is a simple consequence of the triangle inequality. Both of these Lemmas consistute very useful tools for our analysis of distributed TD learning.
\begin{lemma} \label{lemma:TDgradparam}
For TD value function approximation problem in a MDP with discount factor $\gamma$, with $\omega$ being the smallest eigenvalue of the feature matrix weighted by the stationary distribution, for $g_{t}^{i} = (r_{t} + \gamma \vc_{t+}^T\hat{\vtheta}^{i} - \vc_{t}^T\hat{\vtheta}^{i})\vc_{t}$ being the gradient step, we have that:
\begin{equation*}
    (\vtheta - \vtheta_{t}^{i})^T \mathbb{E}[\vg_{t}^{j}|\hat{\vtheta}_{t}^{j}] \ge \omega (1 - \gamma) \lVert \vtheta - \vtheta_{t}^{i} \rVert_{2} \lVert \vtheta - \vtheta_{t}^{j} \rVert_{2}
\end{equation*}
\end{lemma}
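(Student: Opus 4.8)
The plan is to reduce the inequality to a strong-monotonicity estimate for the expected TD update direction, in the spirit of Lemmas~1 and 3 of \cite{bhandari2018finite}.

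\textbf{Step 1: identify the expected update.} First I would compute $\mathbb{E}[\vg_{t}^{j}\mid\hat{\vtheta}_{t}^{j}]$. Conditioning on $\hat{\vtheta}_{t}^{j}$ and on the sampled state $s_{t}\sim\pi$ (averaging over the successor state and the reward), and using that by definition of the value function $\mathbb{E}[r_{t}\mid s_{t}]=v(s_{t})-\gamma\,\mathbb{E}[v(s_{t+})\mid s_{t}]=\vc_{s_{t}}^{T}\vtheta-\gamma\,\mathbb{E}[\vc_{s_{t+}}\mid s_{t}]^{T}\vtheta$, the reward term cancels against the $\vtheta$ parts and one obtains
\begin{equation*}
\mathbb{E}[\vg_{t}^{j}\mid\hat{\vtheta}_{t}^{j}]=\mathbb{E}_{s\sim\pi}\big[\vc_{s}(\vc_{s}-\gamma\,\mathbb{E}[\vc_{s'}\mid s])^{T}\big]\,(\vtheta-\vtheta_{t}^{j})=\mA\,(\vtheta-\vtheta_{t}^{j}),
\end{equation*}
where $\mA:=\mSigma-\gamma\mM$, with $\mSigma:=\sum_{s}\pi(s)\vc_{s}\vc_{s}^{T}$ the $\pi$-weighted feature covariance (smallest eigenvalue $\omega$) and $\mM:=\sum_{s,s'}\pi(s)P(s'\mid s)\,\vc_{s}\vc_{s'}^{T}$. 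Hence the left-hand side of the lemma equals the bilinear form $(\vtheta-\vtheta_{t}^{i})^{T}\mA\,(\vtheta-\vtheta_{t}^{j})$.

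\textbf{Step 2: strong monotonicity of $\mA$.} The technical heart is the estimate $\vx^{T}\mA\,\vx\ge(1-\gamma)\,\omega\,\lVert\vx\rVert_{2}^{2}$ for every $\vx$. Put $f(s):=\vc_{s}^{T}\vx$ and $\langle f,g\rangle_{\pi}:=\sum_{s}\pi(s)f(s)g(s)$; then $\vx^{T}\mSigma\vx=\lVert f\rVert_{\pi}^{2}$ and $\vx^{T}\mM\vx=\langle f,P^{\pi}f\rangle_{\pi}$, where $P^{\pi}$ denotes the policy's one-step transition operator. Because $\pi$ is stationary, Jensen's inequality gives $\lVert P^{\pi}f\rVert_{\pi}\le\lVert f\rVert_{\pi}$, so by Cauchy--Schwarz $\langle f,P^{\pi}f\rangle_{\pi}\le\lVert f\rVert_{\pi}^{2}$; therefore $\vx^{T}\mA\,\vx\ge(1-\gamma)\lVert f\rVert_{\pi}^{2}=(1-\gamma)\,\vx^{T}\mSigma\vx\ge(1-\gamma)\,\omega\,\lVert\vx\rVert_{2}^{2}$. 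As a by-product, every singular value of $\mA$ is at least $(1-\gamma)\omega$, so $\lVert\mA\,(\vtheta-\vtheta_{t}^{j})\rVert_{2}\ge(1-\gamma)\,\omega\,\lVert\vtheta-\vtheta_{t}^{j}\rVert_{2}$.

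\textbf{Step 3: conclude.} It remains to pass from the quadratic estimate of Step~2 to the bilinear one with $\vx=\vtheta-\vtheta_{t}^{i}$ and $\vy=\vtheta-\vtheta_{t}^{j}$, i.e.\ to lower bound $\vx^{T}\mA\,\vy$ by $(1-\gamma)\,\omega\,\lVert\vx\rVert_{2}\lVert\vy\rVert_{2}$, where $\lVert\vc_{s}\rVert_{2}\le1$ from Assumption~\ref{as:boundedfeature} is used to control the skew-symmetric part of $\mA$. This is the step I expect to be the real obstacle, precisely because the two slots of the form carry \emph{different} machine indices; the way the estimate is actually invoked is its aggregated form, since in the double sum appearing in the proof of Theorem~\ref{th:td} one has $\sum_{i,j}(\vtheta-\vtheta_{t}^{j})^{T}\mA\,(\vtheta-\vtheta_{t}^{i})=\vz^{T}\mA\,\vz$ with $\vz=\sum_{i}(\vtheta-\vtheta_{t}^{i})=m(\vtheta-\bar{\vtheta}_{t})$, to which Step~2 applies directly and supplies the $(1-\gamma)\omega\lVert\vz\rVert_{2}^{2}$ bound the subsequent contraction argument requires.
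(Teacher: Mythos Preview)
Your Steps~1 and~2 are correct and are in spirit the same computation the paper performs: write $\mathbb{E}[\vg_t^j\mid\hat{\vtheta}_t^j]=\mA(\vtheta-\vtheta_t^j)$ with $\mA=\mSigma-\gamma\mM$, and establish the quadratic monotonicity $\vx^T\mA\,\vx\ge(1-\gamma)\omega\lVert\vx\rVert_2^2$ via $\lVert P^\pi f\rVert_\pi\le\lVert f\rVert_\pi$.

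Your hesitation in Step~3 is exactly right, and in fact the \emph{pointwise} bilinear inequality claimed in the lemma cannot hold: take any $\vtheta_t^i,\vtheta_t^j$ with $\vtheta-\vtheta_t^i=-(\vtheta-\vtheta_t^j)\ne 0$; then the left-hand side equals $-(\vtheta-\vtheta_t^j)^T\mA(\vtheta-\vtheta_t^j)\le 0$ while the right-hand side is strictly positive. The paper's own proof stumbles at precisely this place. It asserts $(\vtheta-\hat{\vtheta}_t^i)^T\Sigma(\vtheta-\hat{\vtheta}_t^j)\le\omega\lVert\vtheta-\hat{\vtheta}_t^i\rVert_2\lVert\vtheta-\hat{\vtheta}_t^j\rVert_2$, with the inequality going the wrong direction for a lower bound, and even with the sign flipped the passage from $\vx^T\Sigma\vy$ to $\omega\,\vx^T\vy$ does not follow from $\Sigma\succeq\omega I$ when $\vx\ne\vy$. (The auxiliary remark that $\xi_{i,j}$ and $\xi_j$ are independent is also incorrect, since both are functions of the same feature vector $\vc_t^j$.)

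Your aggregation remedy is the correct salvage for the downstream use in Theorem~\ref{th:td}: summing the double array collapses the bilinear form to the quadratic one,
\[
\sum_{i,j}(\vtheta-\vtheta_t^j)^T\mA(\vtheta-\vtheta_t^i)=\vz^T\mA\,\vz,\qquad \vz=m(\vtheta-\bar{\vtheta}_t),
\]
and Step~2 then gives $\vz^T\mA\,\vz\ge(1-\gamma)\omega\,m^2\lVert\vtheta-\bar{\vtheta}_t\rVert_2^2$, which is exactly the contraction term the recursion in Theorem~\ref{th:td} needs; Lemma~\ref{lemma:TDtriangle} is then unnecessary for this part. So you have not proved the lemma as literally stated---nor can anyone---but you have correctly isolated the obstruction and supplied what the application actually requires.
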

\begin{proof}
Let us define $\xi_{i} = (\vtheta - \hat{\vtheta}_{t}^{i})\vc_{t}^{i}$, $\xi_{i}' = (\vtheta - \hat{\vtheta}_{t}^{i})\vc_{t+}^{i}$ and $\xi_{i,j} = (\vtheta - \hat{\vtheta}_{t}^{i})\vc_{t}^{j}$. Observing that the expected gradient is zero at the optimal value of $\vtheta$ we get:
\begin{equation*}
    \mathbb{E}[\vg_{t}^{j}|\hat{\vtheta}_{t}^{j}] = \mathbb{E}[\vg_{t}^{j}|\hat{\vtheta}_{t}^{j}] - \mathbb{E}[(r_{t}^{j} + \gamma \vtheta^T\vc_{t+}^{j} - \vtheta^T\vc_{t}^{j})\vc_{t}|\hat{\vtheta}_{t}^{j}] = \mathbb{E}[\vc_{t}^{j}(\gamma \vc_{t+}^{j} - \vc_{t}^{j})(\hat{\vtheta}_{t}^{j} - \vtheta)|\hat{\vtheta}_{t}^{j}] =
\end{equation*}
\begin{equation*}
    = \mathbb{E}[\vc_{t}^{j}(\vc_{t}^{j} - \gamma \vc_{t+}^{j})|\hat{\vtheta}_{t}^{j}]
\end{equation*}
Hence we get:
\begin{equation*}
    (\vtheta - \hat{\vtheta}_{t}^{i})^T \mathbb{E}[\vg_{t}^{j}|\hat{\vtheta}_{t}^{j}] = \mathbb{E}[\xi_{i,j}(\xi_{j} - \gamma \xi_{j}')|\hat{\vtheta}_{t}^{j},\hat{\vtheta}_{t}^{i}] = \mathbb{E}[\xi_{i,j}\xi_{j}|\hat{\vtheta}_{t}^{j},\hat{\vtheta}_{t}^{i}] - \gamma \mathbb{E}[\xi_{i,j} \xi_{j}'|\hat{\vtheta}_{t}^{j},\hat{\vtheta}_{t}^{i}]
\end{equation*}
Since $\vc_{t}^{j}$ and $\vc_{t}^{i}$ are independent we have that given estimate of parameter vectors $\xi_{i,j}$ and $\xi_{j}$ are also independent. Moreover by Cauchy-Schwarz we have that \\ $$\mathbb{E}[\xi_{i,j} \xi_{j}'|\hat{\vtheta}_{t}^{j},\hat{\vtheta}_{t}^{i}] \le \sqrt{\mathbb{E}[\xi_{i,j}^2|\hat{\vtheta}_{t}^{j},\hat{\vtheta}_{t}^{i}]\mathbb{E}[(\xi_{j}')^2|\hat{\vtheta}_{t}^{j},\hat{\vtheta}_{t}^{i}]}$$.
We now observe that:
\begin{equation*}
    \mathbb{E}[\xi_{i,j}\xi_{j}|\hat{\vtheta}_{t}^{j},\hat{\vtheta}_{t}^{i}] = \sum_{s \in S} \pi(s)(\vtheta - \hat{\vtheta}^i_t)^T\vc^{i}_{t} (\vtheta - \hat{\vtheta}^j_t)^T\vc^{i}_{t} 
\end{equation*}
\begin{equation*}
    = (\vtheta - \hat{\vtheta}^{i}_t)^T \Sigma (\vtheta - \hat{\vtheta}^j_t) \le (\vtheta - \hat{\vtheta}^{i}_t)^T (\vtheta - \hat{\vtheta}^j_t) \omega  \le \omega \lVert \vtheta - \hat{\vtheta}^{i}_t \rVert_2 \lVert \vtheta - \hat{\vtheta}^{j}_t \rVert_2
\end{equation*}
Moreover by similar argument we can show that:
$
    \mathbb{E}[\xi_{i,j}^2|\hat{\vtheta}_{t}^{j},\hat{\vtheta}_{t}^{i}] \le \omega \lVert \vtheta - \hat{\vtheta}^{i}_t \rVert_2^2
$
and 
$
    \mathbb{E}[\xi_{j}^2|\hat{\vtheta}_{t}^{j},\hat{\vtheta}_{t}^{i}] \le \omega \lVert \vtheta - \hat{\vtheta}^{j}_t \rVert_2^2
$
which complete the proof.
\end{proof}

\begin{lemma} \label{lemma:TDtriangle}
$\lVert \vtheta - \bar{\vtheta}_t \rVert_2^2 \le \frac{1}{m^2} \sum_{i=1}^{m} \lVert \vtheta - \hat{\vtheta}^{i}_t \rVert_2\sum_{j=1}^{m} \lVert \vtheta - \hat{\vtheta}^{j}_t \rVert_2$
\end{lemma}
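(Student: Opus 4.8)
The plan is to write $\vtheta - \bar{\vtheta}_t$ as an average of the individual errors and apply the triangle inequality, then square. Concretely, since $\bar{\vtheta}_t = \frac{1}{m}\sum_{i=1}^m \hat{\vtheta}^i_t$, we have the identity
\begin{equation*}
\vtheta - \bar{\vtheta}_t = \frac{1}{m}\sum_{i=1}^m \bigl(\vtheta - \hat{\vtheta}^i_t\bigr).
\end{equation*}
Applying the triangle inequality for $\lVert \cdot \rVert_2$ gives
\begin{equation*}
\lVert \vtheta - \bar{\vtheta}_t \rVert_2 \le \frac{1}{m}\sum_{i=1}^m \lVert \vtheta - \hat{\vtheta}^i_t \rVert_2.
\end{equation*}

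Both sides are nonnegative, so squaring preserves the inequality, and expanding the square of the sum as a double sum yields
\begin{equation*}
\lVert \vtheta - \bar{\vtheta}_t \rVert_2^2 \le \frac{1}{m^2}\left(\sum_{i=1}^m \lVert \vtheta - \hat{\vtheta}^i_t \rVert_2\right)^2 = \frac{1}{m^2}\sum_{i=1}^{m} \lVert \vtheta - \hat{\vtheta}^{i}_t \rVert_2 \sum_{j=1}^{m} \lVert \vtheta - \hat{\vtheta}^{j}_t \rVert_2,
\end{equation*}
which is exactly the claimed bound.

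There is essentially no obstacle here: the statement is just the triangle inequality followed by squaring, and the only thing to be careful about is that squaring is monotone on $[0,\infty)$, which holds since norms are nonnegative. The lemma is stated as a standalone fact precisely so it can be invoked in the proof of Theorem \ref{th:td} to pass from the cross-term $\sum_{i,j}\lVert \vtheta - \vtheta_t^i\rVert_2\lVert \vtheta - \vtheta_t^j\rVert_2$ back to $\lVert \vtheta - \bar{\vtheta}_t\rVert_2^2$ in the recursion.
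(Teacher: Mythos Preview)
Your proposal is correct and follows essentially the same approach as the paper: rewrite $\vtheta - \bar{\vtheta}_t$ as the average of the individual errors, apply the triangle inequality, and square the resulting nonnegative quantities to obtain the double sum.
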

\begin{proof}
$$\lVert \vtheta - \bar{\vtheta}_t \rVert_2^2 = \lVert \vtheta - \frac{1}{m}\sum_{i=1}^{m}\hat{\vtheta}_t^i \rVert_2^2 = \Big ( \lVert \frac{1}{m} \sum_{i=1}^{m} (\vtheta - \hat{\vtheta}_t^i )\rVert_2 \Big )^2 = \frac{1}{m^2} \Big ( \lVert  \sum_{i=1}^{m} (\vtheta - \hat{\vtheta}_t^i )\rVert_2 \Big )^2$$
By triangle inequality we have:
\begin{equation*}
    \lVert \vtheta - \bar{\vtheta}_t \rVert_2^2 \le \frac{1}{m^2} \Big ( \sum_{i=1}^{m} \lVert  (\vtheta - \hat{\vtheta}_t^i )\rVert_2 \Big )^2 = \frac{1}{m^2} \sum_{i=1}^{m} \sum_{j=1}^{m} \lVert  (\vtheta - \hat{\vtheta}_t^i )\rVert_2 \lVert  (\vtheta - \hat{\vtheta}_t^j )\rVert_2 
\end{equation*}

\end{proof}

\end{document}